\DeclareMathAlphabet{\mathcal}{OMS}{cmsy}{m}{n}
\tikzstyle{conceptnode} = [circle, draw, minimum height=0.3cm, minimum width=0.3cm, inner sep=2pt]
\tikzstyle{rolenode} = [circle, draw, thick, minimum height=0.6cm, minimum width=0.6cm, inner sep=2pt]
\tikzstyle{internode} = [ellipse, draw, inner sep=2pt]
\tikzstyle{terminalnode} = [rectangle, draw, minimum height=0.3cm, minimum width=0.3cm, inner sep=2pt]
\tikzstyle{rootnode} = [rectangle,minimum height=0.3cm, minimum width=0.3cm, inner sep=2pt]
\tikzstyle{treenode}=[rectangle split,rectangle split parts=3,draw,text centered]
\tikzstyle{blanknode}=[inner sep=0pt, outer sep=0pt]
\newcommand{\comment}[1]{}
\newcommand{\flower}{\ensuremath{\mathcal{F\!L}_{o\hspace{-0.15ex}}\textit{wer}}\xspace}
\newcommand{\EL}{\ensuremath{\mathcal{E\!L}}\xspace}
\newcommand{\ELplus}{\ensuremath{\mathcal{E\!L}^{+}}\xspace}
\newcommand{\ALC}{\ensuremath{\mathcal{ALC}}\xspace}
\newcommand{\wrt}{w.r.t.\@\xspace}
\newcommand{\ie}{i.e.\@\xspace}
\newcommand{\ExpTime}{\textsc{ExpTime}\xspace}
\newcommand{\PTime}{\textsc{PTime}\xspace}
\newcommand{\PSpace}{\textsc{PSpace}\xspace}
\newcommand{\coNP}{\textsc{co-NP}\xspace}
\newcommand{\NC}{\ensuremath{{\mathsf{N}_\mathsf{C}}}\xspace}
\newcommand{\NCCT}{\NC}
\newcommand{\NR}{\ensuremath{{\mathsf{N}_\mathsf{R}}}\xspace}
\newcommand{\NRCT}{\NR}
\newcommand{\I}{\ensuremath{\mathcal{I}}\xspace}
\newcommand{\J}{\ensuremath{\mathcal{J}}\xspace}
\newcommand{\Y}{\ensuremath{\mathcal{Y}}\xspace}
\newcommand{\Z}{\ensuremath{{\mathcal{Z}}}\xspace}
\newcommand{\subsumed}{\ensuremath{\sqsubseteq}\xspace}
\newcommand{\T}{\ensuremath{\mathcal{T}}\xspace}
\newcommand{\subsumedT}{\ensuremath{\subsumed_{\T}}\xspace}
\newcommand{\canonical}[1]{\ensuremath{{\I_{#1,\T}}}\xspace}
\newcommand{\abs}[1]{\ensuremath{{| #1 |}}\xspace}
\newcommand{\FLnull}{\ensuremath{{\mathcal{F\!L}_0}}\xspace}
\newcommand{\flzero}{\FLnull}
\newcommand{\FLbot}{\ensuremath{{\mathcal{F\!L}_\bot}}\xspace}
\newcommand{\SUBSpar}[3]{\ensuremath{{\sc{Subs}(#1, #2, #3)}}\xspace}
\newcommand{\sig}[1]{\ensuremath{\mathsf{sig}(#1)}\xspace}
\newcommand{\domainof}[1]{\ensuremath{{\Delta^{#1}}}\xspace}
\newcommand{\canmodCT}{\ensuremath{\canonical{C}}\xspace}
\newcommand{\canmodDT}{\ensuremath{\canonical{D}}\xspace}
\newcommand{\canmodAT}{\ensuremath{\canonical{A}}\xspace}
\newcommand{\canmodAnullT}{\ensuremath{\canonical{A_0}}\xspace}
\newcommand{\prefixset}[1]{\ensuremath{\mathsf{prefix}(#1)}\xspace}
\newcommand{\pprefixset}[1]{\ensuremath{\mathsf{pprefix}(#1)}\xspace}
\newcommand{\notblocked}[1]{\ensuremath{\mathsf{nb}(#1)}\xspace}
\newcommand{\cmark}{\ensuremath{\text{\ding{51}}}\xspace}%
\newcommand{\xmark}{\ensuremath{\text{\ding{55}}}\xspace}%
\newcommand{\incomp}[1]{\ensuremath{\mathsf{ic}(#1)}\xspace}
\newcommand{\completionstep}[2]{\ensuremath{\mathbin{\vdash^{#2}_{#1}}}\xspace}
\newcommand{\completionstepT}{\ensuremath{ \mathbin{\vdash_{\T}} }\xspace}
\newcommand{\match}[2]{\ensuremath{\mathsf{match}(#1,#2)}}
\newcommand{\depthtreeof}[1]{\ensuremath{\mathsf{depth}(#1)}\xspace}
\newtheorem{definition}{Definition}[section]
\newtheorem{example}{Example}[section]
\newtheorem{lemma}{Lemma}[section]
\newtheorem{theorem}{Theorem}[section]
\newtheorem{proposition}{Proposition}[section]
\newtheorem{remark}{Remark}[section]
\newcommand{\tRule}[1]{\textbf{T#1}}
\newcommand{\bRule}[1]{\textbf{B#1}}
 \title[Efficient TBox Reasoning with Value Restrictions]{Efficient TBox Reasoning with Value Restrictions using the \flower reasoner} 
  \author[Baader et al.\ ]
  {Franz Baader, Patrick Koopmann, Friedrich Michel, Anni-Yasmin Turhan, Benjamin Zarrie{\ss}
  \\
         \email{firstname.lastname@tu-dresden.de}
         }
\begin{document}
\label{firstpage}
\maketitle

\begin{abstract}
    The inexpressive Description Logic (DL) \FLnull, which has conjunction and value restriction as its only concept constructors,
had fallen into disrepute when it turned out that reasoning in \FLnull w.r.t.\ general TBoxes is \ExpTime-complete,
i.e., as hard as in the considerably more expressive logic \ALC. In this paper, we rehabilitate \FLnull by presenting
a dedicated subsumption algorithm  for \FLnull, which is much simpler than the tableau-based algorithms employed by highly optimized DL reasoners.
Our experiments show that the performance of our novel algorithm, as prototypically implemented in our \flower reasoner, compares very well with that of
the highly optimized reasoners. \flower can also deal with ontologies written in the extension \FLbot of \FLnull with the
top and the bottom concept by employing a polynomial-time reduction, shown in this paper, which eliminates  top and bottom.
We also investigate the complexity of reasoning in DLs related to the Horn-fragments of \FLnull and \FLbot. 

\smallskip \noindent This paper is under consideration in Theory and Practice of Logic Programming (TPLP).

%
%

\end{abstract}
\begin{keywords}
    Description Logics, Reasoning, Subsumption
\end{keywords}


\section{Introduction}

Description Logics (DLs) \cite{BCNMP03,DLbook} are a well-investigated family of logic-based knowledge representation languages,
which are frequently used to formalize ontologies for application domains such as the Semantic Web \cite{HoPH03}
or biology and medicine \cite{HoSG15}.
To define the important notions of such an application domain as formal concepts, DLs state necessary and
sufficient conditions for an individual to belong to a concept.
These conditions can be Boolean combinations of atomic properties required for the
individual (expressed by concept names) or properties that refer
to relationships with other individuals and their properties
(expressed as role restrictions).
For example, the concept of a parent that has only daughters can be formalized by the concept description
$
C := \exists\textit{child}.\textit{Human} \sqcap \forall\textit{child}.\textit{Female},
$
which uses the concept names \textit{Female} and \textit{Human} and the role name \textit{child} as well as
the concept constructors conjunction ($\sqcap$), existential restriction ($\exists r.D$), and
value restriction ($\forall r.D$).
Constraints on the interpretation of concept and role names can be formulated as general concept
inclusions (GCIs). For example, the GCIs
$
\textit{Human} \sqsubseteq \forall\textit{child}.\textit{Human}
$
and
$
\exists\textit{child}.\textit{Human} \sqsubseteq \textit{Human}
$
say that humans have only human children, and they are the only ones that can have human children.
DL systems provide their users with reasoning services that allow them to derive implicit knowledge from the explicitly
represented one. In our example, the above GCIs imply that elements of our concept $C$ also belong to the
concept $D:= \textit{Human} \sqcap \forall\textit{child}.\textit{Human}$, i.e., $C$ is subsumed by $D$ w.r.t.\
these GCIs.
A specific DL is determined by which kind of concept constructors are available.

In the early days of DL research, the inexpressive DL \flzero, which has only conjunction and value restriction 
as concept constructors, was considered to be the smallest possible DL. In fact, when providing a formal semantics for so-called 
property edges of semantic
networks in the first DL system KL-ONE \cite{BrSc85}, value restrictions were used. For this reason, the language for constructing concepts
in KL-ONE and all of the other early DL systems \cite{BMPAB91,Pelt91,MaDW91,WoSc92} contained \flzero. It came as a surprise when it was 
shown that subsumption reasoning
w.r.t.\ acyclic \flzero TBoxes (a restricted form of GCIs) is \coNP-hard \cite{Nebe90}. The complexity increases when more expressive forms of TBoxes are
used: for cyclic TBoxes to \PSpace \cite{Baad90c,KaNi03} and for general TBoxes consisting of GCIs even to \ExpTime  \cite{BaBL05,Hofm05}.
Thus, w.r.t.\ general TBoxes, subsumption reasoning in \flzero is as hard as subsumption reasoning in \ALC, its closure under negation
\cite{Schi91}.

These negative complexity results for \flzero were one of the reasons why the attention in the research of inexpressive DLs
shifted from \flzero to \EL, which is obtained from \flzero by replacing value restriction with existential restriction as a concept constructor. 
In fact, subsumption reasoning in \EL stays polynomial even in the presence of general TBoxes \cite{Bran04}. 
The reasoning method employed in \cite{Bran04}, which is nowadays called consequence-based reasoning, can be used to establish the
\PTime complexity upper bounds also for reasoning in the extension \ELplus of \EL \cite{BaBL05}. This approach also applies to Horn fragments
of expressive DLs such as $\mathcal{SHIQ}$, for which reasoning is \ExpTime-complete, but consequence-based reasoning approaches
behave considerably better in practice than the usual tableau-based approaches for expressive DLs \cite{Kaza09}. 
The DL \flzero is not Horn,\footnote{%
Actually, reasoning in its Horn fragment is \PTime \cite{DBLP:conf/aaai/KrotzschRH07,HORN-DLs}.
}
but it shares with \ELplus and Horn-$\mathcal{SHIQ}$ that (general) TBoxes have canonical models, i.e., models such
that a subsumption relationship between concept names follows from the TBox if and only if it holds in the canonical model.
Consequence-based reasoning basically generates these models. However, whereas the canonical models for \EL and Horn-$\mathcal{SHIQ}$
are respectively of polynomial and exponential size, the canonical models for \flzero, called least functional models \cite{DBLP:conf/gcai/BaaderGP18}, 
may be infinite.

In this paper we build on and extend the results from \cite{MTZ-RuleML-19}. We devise a novel algorithm for deciding subsumption \wrt general \FLnull TBoxes, 
describe a first implementation of it in the new \flower reasoner,\footnote{\url{ https://github.com/attalos/fl0wer}} 
and report on an evaluation of \flower on a large collection of ontologies, which shows that \flower competes well with existing highly optimized DL reasoners. 
Basically, our new algorithm generates \enquote{large enough} parts of the least functional model and achieves termination
using a blocking mechanism similar to the ones employed by tableau-based reasoners.
The key idea of the implementation is to apply the TBox statements like rules and to use a variant of the well-known Rete algorithm for rule application \cite{DBLP:journals/ai/Forgy82}, 
adapted to the case without negation. 
To create a large set of challenging \FLnull ontologies we have used, on the one hand, the OWL 2 EL ontologies of the OWL 
reasoner competition \cite{parsia2017owl} transformed into \FLnull by exchanging the quantifier and omitting too small ontologies as too easy.
On the other hand, we have extracted \FLnull sub-ontologies of decent size from the ontologies of the 
Manchester OWL Corpus (MOWLCorp).\footnote{%
\url{ https://zenodo.org/record/16708}}

In the next section, we introduce \FLnull and its extension \FLbot with the top ($\top$) and the bottom~($\bot$) concepts.
We recall the characterization of subsumption based on least functional models from \cite{DBLP:conf/gcai/BaaderGP18},
introduce a normal form for \FLnull TBoxes,
and show that the bottom concept $\bot$ and the top concept $\top$ can be simulated by such TBoxes.
In Section~\ref{sec:subs}, we introduce our new algorithm, and prove that it is sound, complete, and terminating.
Section~\ref{sect:horn} considers the Horn fragments of \FLnull and \FLbot. First, we show that, for Horn-\FLnull, our
algorithm can be restricted such that it runs in polynomial time. A polynomial upper bound for subsumption in Horn-\FLnull
has already been shown in \cite{DBLP:conf/aaai/KrotzschRH07,HORN-DLs} for an extension of Horn-\FLnull that contains $\bot$.
However, this extension is weaker than Horn-\FLbot. In fact, we also show in Section~\ref{sect:horn} that subsumption in
Horn-\FLbot is \PSpace-complete, and that it becomes \ExpTime-complete in a small extension of Horn-\FLbot. 
Section~\ref{sec:rete} describes how to realize our novel algorithm based on Rete, and
Section~\ref{sec:eval} presents our experimental results, which evaluate several optimizations of the algorithm, and
compare its performance with that of existing highly optimized DL reasoners.

\section{Preliminaries on \FLnull and Extensions}\label{sec:lfm}

\newcommand{\FLnullBot}{\FLbot}
\newcommand{\NRstar}{\ensuremath{\NR^*}}

We introduce the DL \FLnull, 
recall the characterization of subsumption based on least functional models from \cite{DBLP:conf/gcai/BaaderGP18},
introduce a normal form for \FLnull TBoxes,
and show that the bottom concept $\bot$ and the top concept $\top$ can be simulated by such TBoxes.

\subsection{Syntax, Semantics, and Functional Interpretations}

\newcommand{\sigC}[1]{\textsf{sig}_\mathsf{C}(#1)}
\newcommand{\sigR}[1]{\textsf{sig}_\mathsf{R}(#1)}

\paragraph*{Syntax.}
Let \NC and \NR be disjoint, at most countably infinite sets of \emph{concept names} and \emph{role names}, respectively. 
An \emph{\FLnull concept description} (\emph{concept} for short) $C$ is built according to the following syntax rule
\begin{align*}
	C ::= A \mid 
              C \sqcap C \mid \forall r. C, \text{~ where } A \in \NC, r \in \NR.
\end{align*}
Additionally allowing the use of the top concept $\top$ and the bottom concept $\bot$
in the above rule yields the DL 
\FLbot.
A \emph{general concept inclusion} (GCI) for any of these DLs is of the form $C \subsumed D$, 
where $C$ and $D$ are concepts of the respective DL. A \emph{TBox} is a finite set of GCIs. 
The \emph{signature} $\sig{C}$ ($\sig{\T}$) of a concept $C$ (TBox $\T$) is the set of concept and role names occurring in $C$ ($\T$). For convenience, we use further functions to refer only to the concept names and only to the role names in an expression. For a concept or TBox $E$, we set $\sigC{E}=\sig{E}\cap\NC$ and $\sigR{E}=\sig{E}\cap\NR$.

The expression $\forall r. C$ is called a \emph{value restriction}. 
For nested value restrictions we use the following notation: given a word $\sigma = r_1\cdots r_m \in \NRstar$, $m \geq 0$, over the alphabet $\NR$ of role names, and a concept $C$, we write $\forall \sigma. C$ as an abbreviation of  $\forall r_1. \cdots \forall r_m. C$. 
For the empty word $\epsilon$, we have $\forall\epsilon.C=C$.

\paragraph*{Semantics.}
An interpretation $\I$ is a pair $\I = (\domainof{\I},\cdot^\I)$, consisting of a non-empty set 
\domainof{\I} (the \emph{domain} of \I) and an \emph{interpretation function} $\cdot^\I$ that 
maps every concept name $A \in \NC$ to a subset $A^\I \subseteq \domainof{\I}$ of the domain,  
and every role name $r \in \NR$ to a binary relation $r^\I \subseteq \domainof{\I} \times \domainof{\I}$.
The interpretation function is extended to (complex) concepts as follows:
\begin{align*}
	(C \sqcap D)^\I & := C^\I \sqcap D^\I, 
	\qquad 
	\top^\I := \domainof{\I}, \qquad
        \bot^\I := \emptyset,\text{ and} 
	\\
	(\forall r. C)^\I & := \{ d \in \domainof{\I} \mid 
	\forall e \in \domainof{\I}. (d,e) \in r^\I \ \rightarrow \ e \in C^\I  \}.
\end{align*}
The GCI $C \subsumed D$ is \emph{satisfied} in $\I$, denoted as $\I \models C \subsumed D$, if
$C^\I \subseteq D^\I$. The interpretation \I is a \emph{model} of the TBox~\T, denoted as $\I \models \T$, 
if $\I$ satisfies all GCIs in \T. 
The concept $C$ is \emph{subsumed} by the concept $D$ \emph{\wrt \T}, 
denoted as $C \subsumedT D$, if $C^\I \subseteq D^\I$ is satisfied in all models \I of \T.

To decide subsumption in \FLnull, it is sufficient to consider so-called functional interpretations, which are tree-shaped interpretations in which every element has exactly one child for each role name. In such interpretations, domain elements are identified by sequences of role names.

\begin{definition}
	An interpretation $\I = (\domainof{\I},\cdot^\I)$ is called a \emph{functional interpretation} if $\domainof{\I} = \NRstar$ and for all $r \in \NR$, $r^\I = \{ (\sigma,\sigma r) \mid \sigma \in \NRstar \}$. It is called a \emph{functional model} of the \FLnull concept $C$ \wrt the \FLnull TBox \T 
	if $\I \models \T$ and $\epsilon \in C^\I$. 
	For two functional interpretations $\I$ and $\J$ we write 
	\[
	\I \subseteq \J \text{ ~if~ } A^\I \subseteq A^\J \text{ for all } A \in \NC.
	\]
\end{definition}
  
The notion of a functional interpretation fixes the domain and the interpretation of role names. 
Thus, a functional interpretation is uniquely determined by the interpretation of the concept names.
Given a family $(\I_i)_{i\geq 0}$ of functional interpretations, their \emph{intersection} $\J := \bigcap_{i\geq 0} \I_i$
is the functional interpretations that satisfies $A^\J = \bigcap_{i\geq 0} A^{\I_i}$. 

\begin{lemma}[see \cite{DBLP:conf/gcai/BaaderGP18}]
Given an \FLnull concept $C$ and
an \FLnull TBox \T, the functional models of $C$ \wrt \T are closed under intersection. In particular, this
implies that there exists a \emph{least functional model $\canmodCT$} of $C$ \wrt \T, i.e., a functional model
of $C$ \wrt \T such that $\canmodCT \subseteq \J$ holds for all functional models $\J$ of $C$ \wrt \T.
\end{lemma}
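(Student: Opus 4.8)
The plan is to reduce both parts of the statement to a single structural lemma: for every \FLnull concept $E$ and every family $(\I_i)_{i\geq 0}$ of functional interpretations, the intersection $\J := \bigcap_{i\geq 0}\I_i$ satisfies $E^\J = \bigcap_{i\geq 0} E^{\I_i}$. This is the natural generalization, from concept names to arbitrary concepts, of the very definition of $\J$. It is available to us precisely because all functional interpretations share the fixed domain $\NRstar$ and the fixed role interpretations, so that the interpretation of any concept is completely determined by the interpretation of the concept names.

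I would prove this claim by induction on the structure of $E$. The base case $E=A\in\NC$ is exactly the defining equation $A^\J=\bigcap_i A^{\I_i}$. For a conjunction $E=E_1\sqcap E_2$, I would apply the induction hypothesis to $E_1$ and $E_2$ and then invoke the set-theoretic identity $(\bigcap_i X_i)\cap(\bigcap_i Y_i)=\bigcap_i(X_i\cap Y_i)$, since $\cdot^\J$ interprets $\sqcap$ as intersection. The interesting case is the value restriction $E=\forall r.E'$. Here I would use that in any functional interpretation $\sigma\in(\forall r.E')^{\I}$ holds if and only if $\sigma r\in (E')^{\I}$, because $r^\I$ relates $\sigma$ to $\sigma r$ and to nothing else. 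Hence $\sigma\in(\forall r.E')^\J$ iff $\sigma r\in (E')^\J$, which by the induction hypothesis is equivalent to $\sigma r\in(E')^{\I_i}$ for all $i$, i.e.\ to $\sigma\in(\forall r.E')^{\I_i}$ for all $i$. I expect this step to be the main obstacle, and it is exactly where the functional (deterministic, tree-shaped) structure is indispensable: for interpretations with differing domains or non-functional roles, value restriction need not commute with intersection.

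Given the claim, closure under intersection is immediate. If each $\I_i$ is a functional model of $C$ \wrt \T, then for every GCI $F\subsumed G\in\T$ we have $F^{\I_i}\subseteq G^{\I_i}$ for all $i$, whence $F^\J=\bigcap_i F^{\I_i}\subseteq\bigcap_i G^{\I_i}=G^\J$, so $\J\models\T$; and from $\epsilon\in C^{\I_i}$ for all $i$ we get $\epsilon\in\bigcap_i C^{\I_i}=C^\J$. Thus $\J$ is again a functional model of $C$ \wrt \T.

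Finally, to obtain the least functional model $\canmodCT$, I would apply closure under intersection to the family of \emph{all} functional models of $C$ \wrt \T, which is non-empty: the functional interpretation with $A^\I=\NRstar$ for every $A\in\NC$ interprets every concept as $\NRstar$ (a trivial induction), hence satisfies every GCI and contains $\epsilon$ in $C^\I$. Defining $\canmodCT$ as the intersection of this family yields a functional model of $C$ \wrt \T that, by the definition of $\subseteq$ together with the defining equation of the intersection, is contained in every functional model $\J$ of $C$ \wrt \T. The only point requiring care is that this family may be uncountable, whereas the intersection was defined for families indexed by $i\geq 0$; I would note that both the structural claim and its consequences hold verbatim for an arbitrary index set, so the argument goes through unchanged.
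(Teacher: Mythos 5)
Your proof is correct. Note that the paper itself does not prove this lemma at all---it is imported verbatim from the cited reference \cite{DBLP:conf/gcai/BaaderGP18}---so there is no in-paper argument to compare against; your write-up supplies the standard proof one would expect to find there. The decisive step is exactly the one you isolate: because all functional interpretations share the domain $\NR^*$ and the deterministic role interpretation $r^\I = \{(\sigma,\sigma r) \mid \sigma \in \NR^*\}$, one has $\sigma\in(\forall r.E')^\I$ iff $\sigma r\in (E')^\I$, which makes value restrictions commute with intersections and yields $E^\J=\bigcap_i E^{\I_i}$ for arbitrary concepts $E$ by structural induction; closure under intersection and the existence of $\canmodCT$ then follow as you describe. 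Your two points of care are both well taken: non-emptiness of the family of functional models (witnessed by interpreting every concept name as all of $\NR^*$, which works precisely because \FLnull has no $\bot$), and the fact that the family of all functional models need not be countable, so the intersection must be taken over an arbitrary index set---the paper's definition is phrased for families $(\I_i)_{i\geq 0}$, but, as you say, nothing in the argument depends on countability.
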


In \cite{DBLP:conf/gcai/BaaderGP18}, subsumption in \FLnull was characterized as inclusion of least functional models as follows:
given \FLnull concepts $C, D$ and an \FLnull TBox \T, we have 
\begin{equation}\label{sub:char:eqn}
C \subsumedT D\ \ \text{iff}\ \ \canmodDT \subseteq \canmodCT.
\end{equation}
For our purposes, the following characterization of subsumption turns out to be more useful.

\begin{theorem}\label{sub:char:least-functional}
Given \FLnull concepts $C, D$ and an \FLnull TBox \T, we have $C \subsumedT D$ 
iff $\varepsilon\in D^{\canmodCT}$.
\end{theorem}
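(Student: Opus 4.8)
The plan is to prove the two directions of the equivalence separately, exploiting that the least functional model $\canmodCT$ is itself one of the interpretations over which subsumption quantifies, and that $\canmodDT$ is the least among the functional models of $D$ \wrt \T. The existence of least functional models (the preceding lemma) and the characterization~\eqref{sub:char:eqn} are available, and they will carry most of the weight; the only genuinely new content is the observation connecting a single membership fact to the leastness of $\canmodDT$.

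For the forward direction, I would assume $C \subsumedT D$ and argue directly from the definition of subsumption. By definition of a functional model, $\canmodCT$ is a model of \T with $\varepsilon \in C^{\canmodCT}$. Since $C \subsumedT D$ means that $C^\I \subseteq D^\I$ holds in every model \I of \T, it holds in particular for $\I = \canmodCT$, so $\varepsilon \in C^{\canmodCT} \subseteq D^{\canmodCT}$, giving $\varepsilon \in D^{\canmodCT}$. No property of least functional models beyond $\canmodCT$ being a model of \T is needed here, so this direction is routine.

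For the backward direction, I would assume $\varepsilon \in D^{\canmodCT}$ and reduce to the already-established characterization~\eqref{sub:char:eqn}. The key observation is that $\canmodCT$ is a model of \T and, by assumption, satisfies $\varepsilon \in D^{\canmodCT}$; hence $\canmodCT$ qualifies as a functional model of $D$ \wrt \T. By the minimality of $\canmodDT$ among all such models, we obtain $\canmodDT \subseteq \canmodCT$, and~\eqref{sub:char:eqn} then yields $C \subsumedT D$. The main obstacle — though a mild one — is recognizing that the single membership fact $\varepsilon \in D^{\canmodCT}$ is exactly the condition that turns $\canmodCT$ into a competitor witnessing the leastness of $\canmodDT$; once this is spotted, the chain \enquote{minimality of $\canmodDT$, then~\eqref{sub:char:eqn}} closes the argument. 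It is worth noting why this reformulation is useful in practice: verifying $\varepsilon \in D^{\canmodCT}$ requires constructing only the single model $\canmodCT$, rather than building and comparing the two least functional models as demanded by~\eqref{sub:char:eqn}.
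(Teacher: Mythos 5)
Your proof is correct and follows essentially the same route as the paper's: the forward direction instantiates the subsumption at the model $\canmodCT$ using $\varepsilon\in C^{\canmodCT}$, and the backward direction observes that $\varepsilon\in D^{\canmodCT}$ makes $\canmodCT$ a functional model of $D$ \wrt \T, so leastness of $\canmodDT$ gives $\canmodDT\subseteq\canmodCT$ and \eqref{sub:char:eqn} concludes. No gaps.
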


\begin{proof}
Assume that $C \subsumedT D$. Then  $\varepsilon\in C^{\canmodCT}$ (which we know since $\canmodCT$ is a
functional model of $C$ \wrt \T) implies $\varepsilon\in D^{\canmodCT}$ since $\canmodCT$ is a model of \T.
Conversely, $\varepsilon\in D^{\canmodCT}$ implies that $\canmodCT$ is a functional model of $D$ \wrt \T,
and thus $\canmodDT\subseteq \canmodCT$, which yields $C \subsumedT D$ by \eqref{sub:char:eqn}. \hfill
\end{proof}

\subsection{Normal Forms for \FLbot and \FLnull Concepts and TBoxes}
\label{ssec:normal-forms}

An \FLbot \emph{concept} is in \emph{normal form} if it is of the form 
\begin{itemize}
\item 
   $\top$ or $\bot$, or
\item
  a non-empty conjunction of concepts of the form $\forall r.\bot$, $A$, $\forall r.A$, where $A\in\NC$ and $r\in\NR$.
\end{itemize}
An \FLbot \emph{TBox} is in \emph{normal form} if it contains only GCIs of the form $C\sqsubseteq D$, where $C, D$ are
in normal form, and $C$ is not $\bot$ and $D$ is not $\top$.
In addition,
\FLnull concepts (TBoxes) in normal form are \FLbot concepts (TBoxes) in normal form that contain neither $\top$ nor $\bot$.

It is easy to see that every (\FLbot or \FLnull) TBox $\T$ can be transformed in
linear time into a TBox in normal form such that all subsumption relationships in the signature of
$\T$ are preserved. For this, one removes tautological GCIs with $\bot$ on the left-hand side
or $\top$ on the right-hand side, and flattens value-restrictions $\forall r.E$ with $E\not\in\NC\cup\{\bot\}$.
To \emph{flatten} an occurrence of $\forall r.E$ in a GCI $C\sqsubseteq D$ means that $E$ is replaced by a fresh concept name $A_E$. 
If the occurrence is within $C$, then the GCI $E\sqsubseteq A_E$ is added to the TBox, and otherwise $A_E\sqsubseteq E$.

It is well-known that subsumption between complex concepts can be reduced in linear time to subsumption between concept names.
In fact, we have $C \subsumedT D$ iff $A\sqsubseteq_{\T'} B$, where $A, B$ are concept names not occurring in $C$, $D$, or $\T$,
and $\T'$ is obtained from $\T$ by adding the GCIs $A\sqsubseteq C$ and $D\sqsubseteq B$.

\begin{proposition}\label{prop:normal-form}
Subsumption in \FLbot (\FLnull) w.r.t.\ TBoxes
can be reduced in linear time 
to subsumption of concept names
w.r.t.\ \FLbot (\FLnull) TBoxes in normal form.
\end{proposition}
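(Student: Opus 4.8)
The plan is to prove Proposition~\ref{prop:normal-form} by combining two linear-time reductions that have already been sketched in the surrounding text: first reduce subsumption between complex concepts to subsumption between concept names, and then bring the resulting TBox into normal form. Since the text explicitly notes that $C\subsumedT D$ iff $A\sqsubseteq_{\T'}B$ where $\T'=\T\cup\{A\sqsubseteq C,\ D\sqsubseteq B\}$ for fresh concept names $A,B$, the first reduction is essentially given; I would just verify that it is correct (soundness and completeness of the encoding) and that it runs in linear time, noting that it stays within \FLbot (resp.\ \FLnull) since no new constructors are introduced.

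The main work is the normalization step, which I would carry out by structural transformations preserving all subsumptions over $\sig{\T}$. First I would discard tautological GCIs, namely those with $\bot$ on the left-hand side or $\top$ on the right-hand side, since these hold in every interpretation and can be removed without affecting the models. Then I would iterate the flattening operation described in the text: whenever a value restriction $\forall r.E$ occurs with $E\notin\NC\cup\{\bot\}$, replace $E$ by a fresh concept name $A_E$ and add $E\sqsubseteq A_E$ if the occurrence is on the left-hand side of the GCI, or $A_E\sqsubseteq E$ if it is on the right-hand side. I would argue that each such flattening preserves subsumption over the original signature, because the fresh name $A_E$ can always be interpreted exactly as $E^\I$ in any model of the remaining axioms, so models of the flattened TBox correspond to models of the original one when restricted to $\sig{\T}$.

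The key step I expect to require the most care is establishing that the flattening rule, when applied exhaustively, terminates and yields GCIs whose left- and right-hand sides are genuinely in the normal form stated at the start of Section~\ref{ssec:normal-forms}, i.e.\ non-empty conjunctions of concepts of the form $\forall r.\bot$, $A$, or $\forall r.A$. I would argue termination by a measure on the nesting depth of value restrictions whose arguments are not in $\NC\cup\{\bot\}$: each flattening step strictly decreases this measure, since it replaces a complex argument by an atom and the newly added axioms introduce value restrictions only of strictly smaller such depth. After exhaustive flattening, every value restriction has the form $\forall r.A$ or $\forall r.\bot$, and any top-level conjunct that is a bare concept name is already admissible, so the resulting GCIs match the normal form.

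Finally I would check the complexity bound, arguing that the number of flattening steps is linear in the size of $\T'$ because each introduces one fresh name per offending subconcept and the total number of such subconcepts is bounded by the size of $\T'$, and that the overall transformation — removing tautologies, adding the two axioms for $A,B$, and flattening — is therefore linear. The only subtlety worth flagging is bookkeeping to ensure that flattening does not accidentally enlarge or shrink the relevant signature: the fresh names $A_E$ lie outside $\sig{\T}$, so the claim that \emph{all subsumption relationships in the signature of $\T$ are preserved} must be stated and used with respect to $\sig{\T}$ exactly, which is where the soundness argument for the fresh-name interpretation does its work.
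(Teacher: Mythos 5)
Your proposal is correct and follows essentially the same route as the paper, which likewise combines the removal of tautological GCIs, the flattening of nested value restrictions via fresh names $A_E$ with the axioms $E\sqsubseteq A_E$ or $A_E\sqsubseteq E$, and the reduction of $C\subsumedT D$ to $A\sqsubseteq_{\T'}B$ via the added GCIs $A\sqsubseteq C$ and $D\sqsubseteq B$. The extra care you devote to termination of flattening and to the signature bookkeeping only makes explicit what the paper leaves as ``easy to see.''
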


For subsumption between concept names $A, B$ in the DL \FLnull, the characterization of subsumption given in 
Theorem~\ref{sub:char:least-functional} means that, to decide whether $A\subsumedT B$ holds, it is sufficient to check
whether the root of $\canmodAT$ is contained in $B^{\canmodAT}$, i.e., whether the label of this root
contains the concept name $B$.

\subsection{Reducing Subsumption in \FLnullBot to Subsumption in \FLnull}\label{ssec:reductions}

Subsumption between concept names in \FLnullBot can be reduced to subsumption in \FLnull using the 
following transformation rules on \emph{normalized} \FLnullBot TBoxes $\T$:
\begin{enumerate} 
 \item[\tRule{1}] 
       Replace $\bot$ and $\top$ everywhere by the fresh concept names $A_\bot$ and $A_\top$, respectively;
 \item[\tRule{2}]\label{red:item:two} 
       add the axioms $A_\bot\sqsubseteq B$ for all $B\in\sigC{\T}$; 
 \item[\tRule{3}]\label{red:item:three} 
       add the axioms $B\sqsubseteq A_\top$ and $A_\top\sqsubseteq\forall r.A_\top$ for all
       $B\in\sigC{\T}$ and all $r\in\sigR{\T}$.
\end{enumerate}
We denote the TBox resulting from this transformation as $\FLnull(\T)$.

\begin{lemma}\label{lem:fl0bot}
 For all \FLnullBot TBoxes $\T$ in normal form and all concept names $A$, $B$ occurring in $\T$, we
 have $A\sqsubseteq_\T B$ iff $A\sqsubseteq_{\FLnull(\T)} B$.
\end{lemma}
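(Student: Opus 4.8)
The plan is to prove the two directions of the stated equivalence separately, in each case by transforming a model of one TBox into a model of the other while preserving the extensions of the original concept names $A$ and $B$. Throughout, write $\sigma$ for the renaming sending $\bot\mapsto A_\bot$ and $\top\mapsto A_\top$, so that \tRule{1} turns each GCI $C\sqsubseteq D$ of $\T$ into $\sigma(C)\sqsubseteq\sigma(D)$. The two directions are markedly asymmetric in difficulty.

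The direction $A\sqsubseteq_{\FLnull(\T)}B \Rightarrow A\sqsubseteq_\T B$ is the easy one. Given any \FLbot model $\I$ of $\T$, I would define an \FLnull interpretation $\I'$ over the same domain, with the same role- and (original) concept-name extensions, and additionally set $A_\bot^{\I'}:=\emptyset$ and $A_\top^{\I'}:=\domainof{\I}$. Since $A_\bot$ and $A_\top$ are then interpreted exactly as $\bot$ and $\top$, a routine induction gives $\sigma(C)^{\I'}=C^{\I}$ for every concept $C$, so the renamed GCIs hold; and the axioms added by \tRule{2} and \tRule{3} hold trivially, because $A_\bot^{\I'}$ is empty and $A_\top^{\I'}$ is the whole domain (in particular $\forall r.A_\top$ is satisfied everywhere once $A_\top$ is the whole domain). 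Thus $\I'\models\FLnull(\T)$, and applying the hypothesis to $\I'$ yields $A^\I\subseteq B^\I$; as $\I$ was arbitrary, $A\sqsubseteq_\T B$.

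The converse $A\sqsubseteq_\T B \Rightarrow A\sqsubseteq_{\FLnull(\T)}B$ is the hard direction, and I would argue it contrapositively: from an \FLnull model $\I$ of $\FLnull(\T)$ with a witness $d\in A^\I\setminus B^\I$, I construct an \FLbot model $\J$ of $\T$ that still separates $A$ from $B$. The crucial idea is the choice of domain $\domainof{\J}:=A_\top^{\I}\setminus A_\bot^{\I}$, with roles and original concept names simply restricted to $\domainof{\J}$ (and $\top^\J=\domainof{\J}$, $\bot^\J=\emptyset$ forced by the semantics). Here \tRule{2} and \tRule{3} are exactly what makes the construction go through: \tRule{3}'s axiom $A_\top\sqsubseteq\forall r.A_\top$ ensures that every $r$-successor (in $\I$) of a domain element again lies in $A_\top^\I$, so the only successors removed in passing to $\domainof{\J}$ are those inside $A_\bot^\I$; and \tRule{3}'s axioms $B'\sqsubseteq A_\top$ together with \tRule{2}'s axioms $A_\bot\sqsubseteq B'$ (for all $B'\in\sigC{\T}$) force the witness $d$ into $A_\top^\I$ and out of $A_\bot^\I$, so that $d\in\domainof{\J}$ and the separation $d\in A^\J\setminus B^\J$ is preserved.

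The main obstacle is then verifying $\J\models\T$, i.e.\ $\J\models C\sqsubseteq D$ for each GCI of $\T$. I would establish two auxiliary facts by induction over the normal-form structure of concepts: for $e\in\domainof{\J}$, membership $e\in C^\J$ implies $e\in\sigma(C)^\I$ (so that the hypothesis $\I\models\sigma(C)\sqsubseteq\sigma(D)$ can be applied), and conversely $e\in\sigma(D)^\I$ implies $e\in D^\J$. The delicate cases are the value restrictions. For a conjunct $\forall r.A'$, an $r$-successor that was deleted from $\domainof{\J}$ must have lain in $A_\bot^\I$ and hence, by \tRule{2}, already in $A'^\I$, so the restriction cannot create a violation; for a conjunct $\forall r.\bot$, the successors surviving in $\domainof{\J}$ are precisely those outside $A_\bot^\I$, matching $\bot^\J=\emptyset$; and a left-hand side $C=\top$ is handled because every $e\in\domainof{\J}$ lies in $A_\top^\I$ by construction. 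The normal-form assumption is what lets the induction close: a conjunctive left-hand side never contains $\top$, and no right-hand side equals $\top$, which rules out the otherwise problematic axiom shapes.
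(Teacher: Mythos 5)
Your proof is correct and follows essentially the same approach as the paper: the easy direction interprets $A_\bot$ as $\emptyset$ and $A_\top$ as the full domain, and the hard direction deletes the $A_\bot$-elements and verifies the value-restriction cases via \tRule{2} and \tRule{3}. The only (harmless) difference is that you take $\Delta^{\J}=A_\top^{\I}\setminus A_\bot^{\I}$ directly, whereas the paper first assumes w.l.o.g.\ that all elements are reachable from the witness (which forces them into $A_\top^{\I}$) and then removes $A_\bot^{\I}$.
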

\begin{proof}
\enquote{$\Leftarrow$}: 
Assume that $A\not\sqsubseteq_\T B$. Then there is a model $\I$ of $\T$ such that $A^\I\not\subseteq B^\I$.
We modify $\I$ to an interpretation $\J$ by setting ${A_\bot}^{\J} := \emptyset$ and
${A_\top}^{\J} := \Delta^\I$, and leave the domain as well as the interpretation of the other concept names and the role 
names as in $\I$. It is easy to see that $\J$ is a model of $\FLnull(\T)$ that satisfies
$A^{\J} = A^\I\not\subseteq B^\I = B^{\J}$.

\enquote{$\Rightarrow$}:
Assume that $A\not\sqsubseteq_{\FLnull(\T)} B$, and let $\I$ be a model of $\FLnull(\T)$ that contains an element $d_0$ with
$d_0 \in A^\I\setminus B^\I$. We may assume without loss of generality that all elements of $\Delta^\I$ are reachable from $d_0$ 
via a path of roles in $\sigR{\T}$.
Due to the GCIs introduced by \tRule{3}, 
$d_0\in A^\I$ yields  $d_0\in {A_\top}^{\I}$, 
and  thus $d\in {A_\top}^{\I}$ holds for all $d\in \Delta^\I$. We also know that $d_0\not\in{A_\bot}^{\I}$ since otherwise
the GCI $A_\bot\sqsubseteq B$ added by \tRule{2} would yield $d_0\in B^\I$, contradicting our assumption that $d_0$
is a counterexample to the subsumption. The interpretation
$\J$ is obtained from $\I$ by removing all elements of ${A_\bot}^{\I}$.
Then $d_0$ is an element of $\Delta^{\J}$ and it satisfies $d_0 \in A^{\J}\setminus B^{\J}$. Thus, it remains to show
that $\J$ is a model of $\T$.

First, note that ${A_\top}^{\J} = \Delta^{\J} = \top^{\J}$ and ${A_\bot}^{\J} = \emptyset = \bot^{\J}$. This implies that it is enough to prove that the GCIs from $\T$ transformed by 
\tRule{1}, which are satisfied by $\I$ since it is a model of $\FLnull(\T)$, 
are also satisfied by $\J$.
For this, it is in turn sufficient to show that, for all concepts $C$ in normal form occurring in $\FLnull(\T)$ and all $d\in \Delta^{\J}$
we have $d\in C^\I$ iff $d\in C^{\J}$. For concept names this is trivial by the definition of $\J$. 
Thus, consider a value restriction of the form $\forall r.A_1$.

First, assume that $d\in (\forall r.A_1)^\I$, but $d\not\in (\forall r.A_1)^{\J}$. 
Then there is an element $e\in \Delta^{\J}$ with $(d,e)\in r^{\J}$, but $e\not\in {A_1}^{\J}$. However, since $e\in \Delta^{\J}$, we already
know that $e\not\in {A_1}^{\J}$ implies $e\not\in {A_1}^{\I}$. Since we also have $(d,e)\in r^{\I}$, this contradicts our assumption that
$d\in (\forall r.A_1)^\I$.

Second, assume that $d\in (\forall r.A_1)^{\J}$, but $d\not\in (\forall r.A_1)^\I$.
Then there is an element $e\in \Delta^{\I}$ with $(d,e)\in r^{\I}$, but $e\not\in {A_1}^{\I}$.
If $e\in \Delta^{\J}$, then we also have $(d,e)\in r^{\J}$ and $e\not\in {A_1}^{\J}$, which contradicts our assumption that
$d\in (\forall r.A_1)^{\J}$. Otherwise, we must have $e\in {A_\bot}^{\I}$ since $e$ was removed. But then the GCIs introduced by \tRule{2}
yield $e\in {A_1}^{\I}$, contradicting our assumption on $e$.\footnote{%
Note that $A_1$ cannot be $A_\top$ since a value restriction of the form $\forall r.\top$ is not normalized.
}
 \hfill \end{proof}

Since normalization of an \FLbot TBox and the transformation into an \FLnull TBox described in this subsection
are polynomial, we obtain the following result.

\begin{theorem}
Subsumption in \FLbot can be reduced in polynomial time to subsumption in \FLnull.
\end{theorem}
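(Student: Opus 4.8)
The plan is to chain together the two reductions already established in this subsection, namely the normalization of Proposition~\ref{prop:normal-form} and the $\bot/\top$-elimination of Lemma~\ref{lem:fl0bot}, and then to verify that each step runs in polynomial time so that their composition does too.

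First I would take an arbitrary \FLbot subsumption instance, that is, two \FLbot concepts $C, D$ together with an \FLbot TBox. By Proposition~\ref{prop:normal-form} this can be reduced in linear time to an instance of the form $A \subsumedT B$, where $A, B$ are concept names and \T is an \FLbot TBox in normal form; this single step both normalizes the TBox and replaces the complex concepts by concept names via fresh defining axioms, and it preserves the subsumption answer. The point of performing this step first is that Lemma~\ref{lem:fl0bot} is stated only for \emph{normalized} \FLbot TBoxes and for concept names, so the input must be brought into that shape before the next reduction can be applied.

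Next I would apply the transformation rules \tRule{1}--\tRule{3} to \T to obtain $\FLnull(\T)$. Lemma~\ref{lem:fl0bot} guarantees that $A \subsumedT B$ iff $A\sqsubseteq_{\FLnull(\T)} B$, so the answer is again preserved. It then remains to check two things: that $\FLnull(\T)$ is genuinely an \FLnull TBox, and that the transformation is polynomial. The first holds because \tRule{1} replaces every occurrence of $\top$ and $\bot$ by the fresh concept names $A_\top$ and $A_\bot$, so no top or bottom symbol survives in the output. For the second, \tRule{2} contributes $|\sigC{\T}|$ axioms and \tRule{3} contributes $|\sigC{\T}| + |\sigR{\T}|$ axioms, each of constant size; hence the size of $\FLnull(\T)$ grows only linearly in the size of the signature of \T, and the whole transformation is in fact linear, and so a fortiori polynomial.

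Finally, since the composition of two polynomial-time reductions is again a polynomial-time reduction, chaining these two steps yields a polynomial-time reduction from subsumption in \FLbot to subsumption in \FLnull, which is what the theorem claims. I do not expect any genuine obstacle here: the correctness of each step is already supplied by Proposition~\ref{prop:normal-form} and Lemma~\ref{lem:fl0bot}, so the only real content is the bookkeeping of the polynomial size bound, which the signature-indexed axiom counts above make immediate.
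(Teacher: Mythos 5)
Your proposal is correct and follows exactly the paper's argument: the paper justifies this theorem in one sentence by observing that normalization (Proposition~\ref{prop:normal-form}) and the \tRule{1}--\tRule{3} transformation of Lemma~\ref{lem:fl0bot} are both polynomial, which is precisely the two-step chain you spell out. Your explicit axiom counts for \tRule{2} and \tRule{3} are accurate and simply make the paper's implicit size bound concrete.
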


\section{Subsumption Algorithm for \FLnull with General TBoxes}\label{sec:subs}

\newcommand{\scap}{\sqcap}
\newcommand{\Imc}{\I}
\newcommand{\squbseteq}{\sqsubseteq}
\newcommand{\sqscap}{\sqcap}
\newcommand{\sqsubsteeq}{\sqsubseteq}
\newcommand{\sqsusbeteq}{\sqsubseteq}
\newcommand{\Ymc}{\Y}
We define a decision procedure for subsumption of two concepts \wrt a TBox based 
on a finite representation of the least functional model obtained by 
\enquote{applying} GCIs like rules. By Proposition~\ref{prop:normal-form}, it is sufficient to focus on \FLnull TBoxes in normal form and subsumption between concept names. We can then use Lemma~\ref{lem:fl0bot} to extend the applicability of our algorithm to \FLnullBot.

In the remainder of this section, $\T$ denotes a \FLnull TBox in normal form, and we focus on the task of deciding $A\sqsubseteq_\T B$ for two concept names $A$, $B$ occurring in $\T$.
For the sake of simplicity, we assume in this section that \NC and \NR consist exactly of the concept and role names occurring in $\T$.
In particular, this means that \NC and \NR are finite and their cardinalities are bounded by the size of~$\T$.

The algorithm computes a finite subtree of the tree $\canmodAT$ 
such that one can read off the named subsumers (concept names) of $A$ \wrt \T at 
the root.  
The finite structure that the algorithm operates on is called \emph{partial functional interpretation}. This is similar to a functional interpretation, except that the domain is a \emph{finite} prefix-closed subset of $\NRstar$, that is, a finite tree.
\begin{definition}
	{An} interpretation $\Y = (\domainof{\Y},\cdot^\Y)$ is a \emph{partial functional interpretation} iff 
	$\domainof{\Y} \subseteq \NRstar$ is a \emph{finite} prefix-closed set and 
	$r^\Y = \{ (\sigma,\sigma r) \mid \sigma r \in \domainof{\Y} \}$ for all $r \in \NR$.
\end{definition}
Note that, as with functional interpretations, the interpretation of the role names is already determined by the domain. Thus, it suffices to give the domain and the interpretation of concept names to fix a partial interpretation.

Informally, the algorithm for deciding $A \subsumedT B$ proceeds as follows:
it starts with a partial functional interpretation $\Y$ that has $\epsilon$ as only domain element, and for which $A^\Y=\{\epsilon\}$. 
In each iteration, a domain element $d$ of the current tree $\Y$ and a single GCI $C\sqsubseteq D$ from $\T$ is chosen such that $d$ matches $C$ and does not match $D$. The tree is then extended so that $d$ matches $D$. The extension can affect both the domain and the interpretation of concept names. The method proceeds in such a way that, for every generated tree $\Y$, the invariant  $\Y \subseteq \canmodAT$ is satisfied. Termination is established by blocking further extensions for duplicate elements. The algorithm terminates if the following holds for every non-blocked element $d$ and every GCI $C\sqsubseteq D$ in $\T$: if $d$ matches $C$, then $d$ also matches $D$. Soundness and completeness is shown by establishing a correspondence between the nodes in the final tree and nodes in the least function model of $A$ \wrt \T.
To describe the procedure more formally, we must define the following notions:
\begin{enumerate}
	\item the condition under which a domain element of a partial interpretation matches a concept,
	\item the extension of the tree to achieve a match of an element with the right-hand side of a GCI, and
	\item the conditions that distinguish blocked from non-blocked elements.
\end{enumerate}
To address the first point, we introduce the following auxiliary notions.
\begin{definition}\label{def:match}
	Let $\Y = (\domainof{\Y},\cdot^\Y)$ be a partial functional interpretation and $D$ a concept in normal form. 
	The set of elements in $\domainof{\Y}$ that \emph{match} $D$, denoted by $\match{D}{\Y}$, is defined inductively as follows:
	\begin{align*}
	\match{A}{\Y} :=~&  A^\Y \text{ for all } A\in \NCCT; 
	\\
	\match{\forall r. A}{\Y} :=~& \{ \sigma \in \domainof{\Y} \mid 
	\sigma r \in A^\Y  \} 
	%
	\text{ for all } r \in \NRCT \text{ and } A \in \NCCT;
	\\
	\match{C_1 \sqcap C_2}{\Y} :=~& \match{C_1}{\Y} \cap \match{C_2}{\Y}. 
	\smallskip
	\end{align*}
\end{definition}
Since $\Y$ is partial functional (\ie has \emph{at most} one child per node for each role name), it is easy to see that $\sigma \in \match{C}{\Y}$ 
implies $\sigma \in C^\Y$. The converse need not be true, as $\sigma$ may have no $r$-child in $\domainof{\Y}$.
We say that $\sigma \in \domainof{\Y}$ \emph{violates} the GCI $C \subsumed D$ iff $\sigma \in \match{C}{\Y}$ 
and $\sigma \notin \match{D}{\Y}$. In this case, $\sigma$ is called an \emph{incomplete element}.
Given a TBox \T in normal form and a partial functional interpretation $\Y$, we define the \emph{set of all incomplete elements} as follows:
\begin{align*}
\incomp{\Y,\T} := \{ \sigma \in \domainof{\Y} \mid \text{ there is } C \subsumed D \in \T \text{ such that } \sigma \text{ violates } C \subsumed D  \}.
\end{align*}
Intuitively, the elements in $\incomp{\Y,\T}$ are those eligible for an extension of $\Y$ towards building a representation of the least functional model, while those in $\domainof{\Y} \setminus \incomp{\Y,\T}$ are not. 
As an additional filter for extensions, we define a blocking condition. First, we introduce auxiliary notions for the blocking mechanism consisting of the standard notions of prefix, proper prefix, and a strict total order on $(\NRCT)^*$. 

Let $\sigma,\rho \in \NRstar$. The \emph{length of an element} $\sigma \in \NRstar$ is denoted by $\abs{\sigma}$. We write $\rho \in \prefixset{\sigma}$ if $\sigma = \rho \widehat{\sigma}$ 
for some $\widehat{\sigma} \in \NRstar$, 
and $\rho \in \pprefixset{\sigma}$ if $\rho \in \prefixset{\sigma}$ and $\rho \neq \sigma$. In the latter case, $\rho$ is called a 
\emph{proper prefix} of $\sigma$. 
Let $\prec$ be any total order on $\NRstar$ such that $\abs{\sigma}<\abs{\rho}$ implies $\sigma\prec\rho$ for all $\sigma, \rho\in\NRstar$. 
Since $\NR$ is finite, this implies that, for any element of $\sigma\in \NRstar$, there are only finitely many elements $\rho$ such that
$\rho \prec\sigma$. In particular, the order  $\prec$ is well-founded.
 
For a (partial) functional interpretation $\Y=(\domainof{\Y},\cdot^\Y)$ and $\sigma\in\domainof{\Y}$, we define the \emph{label} of $\sigma$ in $\Y$ as 
 $\Y(\sigma) := \{ A \in \NCCT \mid \sigma \in A^\Y \}$. The cardinality of $\Y(\sigma)$ is bounded by the size of $\T$, and thus 
there can be only exponentially many different such labels.
\begin{definition}\label{def:block}
	Let $\Y = (\domainof{\Y},\cdot^\Y)$ be a partial functional interpretation. 
	The \emph{set of all blocked elements} in $\domainof{\Y}$ is defined 
        by induction over the well-founded order $\prec$:
	\begin{enumerate}
		\item[\bRule{1}] 
		The least element $\epsilon$ is not blocked. 
		\item[\bRule{2}] 
                        The element $\sigma\in\domainof{\Y}$ is blocked if 
			there exists $\omega \in \domainof{\Y}$ with $\omega \prec \sigma$ such that 
			$\Y(\sigma) = \Y(\omega)$ and $\omega$ is not blocked. 
			
		\item[\bRule{3}] 
			Furthermore, the element  $\sigma\in\domainof{\Y}$ is blocked if there exists $\rho \in \pprefixset{\sigma}$ such that $\rho$ is blocked. 
	\end{enumerate}
Only elements of $\domainof{\Y}$ for which \bRule{1} or \bRule{2} holds can be blocked. All other elements
are \emph{non-blocked} elements, which are collected in the set $\notblocked{\Y}$.
\end{definition}	

Condition~\bRule{2} corresponds to \emph{anywhere blocking} in classical tableau algorithms: intuitively, if there are two nodes with the same label, it suffices to reason only on one of them, and the ordering decides which one is used. Condition~\bRule{3} corresponds to \emph{ancestor blocking}: if it is already decided that a node can be ignored, it is not necessary to consider its descendants either. 
Nodes blocked due Condition~\bRule{2} 
are called \emph{directly blocked}, while nodes blocked due Condition~\bRule{3} 
are called \emph{indirectly blocked}.

Next, we define what an extension step is. Such a step expands a single non-blocked and incomplete element in a partial functional interpretation. 
\begin{definition}
	Let $\Y$  
        be a partial functional interpretation, $\T$ a TBox in normal form,  $m,n \geq 0$ and
	\begin{align*}
       \alpha\ \mbox{a GCI in $\T$ of the form}\ \
	\alpha = C \subsumed 
	\left(A_1 \sqcap \cdots \sqcap A_m \sqcap \forall r_1. B_1 \sqcap \cdots \forall r_n. B_n \right).  
	\end{align*}
	In addition, let $\sigma \in \notblocked{\Y} \cap \incomp{\Y,\T}$ 
 	be a non-blocked, incomplete element in \Y violating $\alpha$. 
	Then, the \emph{expansion of $\alpha$ at $\sigma$ in $\Y$} is the partial 
interpretation $\Z$ defined by
	\begin{itemize}
		\item $\domainof{\Z} = \domainof{\Y} \cup \{ \sigma r_1,\ldots,\sigma r_n  \}$;
		\item $A_i^\Z = A_i^\Y \cup \{ \sigma \}$ for all $i = 1,\ldots,m$;
		\item $B_i^\Z = B_i^\Y \cup \{ \sigma r_j \mid 1\leq j\leq n, B_j=B_i\}$ for all $i = 1,\ldots,n$; and
		\item $Q^\Z = Q^\Y$ for all $Q \in \NCCT \setminus \{ A_1,\ldots A_m,B_1,\ldots,B_n \}$.
	\end{itemize}
	A partial functional interpretation $\Z$ 
	is a \emph{$\T$-completion} of $\Y$, written as $\Y \completionstepT \Z$, 
	iff $\Z$ is an expansion of some $\alpha\in\T$ at some $\sigma'\in\notblocked{\Y}\cap\incomp{\Y,\T}$. We denote by $\completionstepT^*$ the reflexive transitive closure of $\completionstepT$ and call $\Z$ with $\Y \completionstepT^* \Z$ \emph{complete} 
if every incomplete element is blocked, i.e., $\notblocked{\Y_n} \cap \incomp{\Y_n,\T} = \emptyset$. 
\end{definition}
Depending on the choice of $\sigma$ and the GCI, there can be several \T-completions of~$\Y$. Also note that it is guaranteed that either 
$\notblocked{\Y} \cap \incomp{\Y,\T} = \emptyset$ or there exists a \T-completion of~$\Y$. Thus, in case a given $\Z$ with $\Y \completionstepT^* \Z$ 
is not complete, it can be further completed.

Given the input $A_0,B_0 \in \NC$ and $\T$, the algorithm \SUBSpar{A_0}{B_0}{\T} for deciding $A_0 \subsumedT B_0$ computes a sequence of $\T$-completions until it reaches a complete partial functional interpretation, i.e., one where no non-blocked element violates any GCI from $\T$.
The algorithm starts with the following partial functional interpretation:
\begin{align}\label{eq:inittree}
\domainof{\Y_0} := \{ \epsilon \}; \quad 
A_0^{\Y_0} := \{ \epsilon \} 
\quad  \text{ and } \quad 
B^{\Y_0} := \emptyset \text{ for all } B \in \NCCT \setminus \{ A_0 \},
\end{align}	
and computes a sequence 
\begin{align*}
	\Y_0 \completionstepT \Y_1 \completionstepT \cdots \Y_{(n-1)}\completionstepT \Y_n
\end{align*}
such that $\Y_n$ is complete 
in the sense introduced above.
It answers \enquote{yes} if $B_0 \in \Y_n(\epsilon)$ (or equivalently $\epsilon \in B_0^{\Y_n}$) and \enquote{no} otherwise. 

\begin{example}
	In this example, we illustrate the completion steps and how the blocking conditions are applied.
	Let $\NC = \{ A,B,K,L,M \}$ and $\NR = \{ r,s \}$.
	The TBox \T is defined as follows:
	\begin{align*}
	\T := \{ ~~~~~~~~
	 A &\subsumed  \forall r. A,  &A \subsumed~&  B, \\ 
	 A &\subsumed  \forall s. K,  & K \subsumed~&  \forall s. A, \\
	 \forall s. B &\subsumed  L,  &\forall s. L   \subsumed~& M ~~~~~~~~ \}.
	\end{align*}
	One can verify that 
	$$
	A \subsumedT M.
	$$
	In fact, the GCIs $A \subsumed \forall s. K$, $K \subsumed \forall s. A$ and $A \subsumed B$ yield $A \subsumedT \forall s. \forall s. B$. 
	Using $\forall s. B \subsumed L$ and $\forall s. L \subsumed M$,  we then obtain $A \subsumedT M$. 
	\begin{figure}
		\framebox[0.95\textwidth]{
		
				\centering
			
				\begin{tikzpicture}
				\node[treenode] (eps0) {$\boldsymbol{\epsilon}$ \nodepart[text width=1cm]{second} $\{ A  \} $ \nodepart{third} \phantom{\cmark}  };
				
				\node[treenode, below=0.5cm of eps0] (eps1) {$\boldsymbol{\epsilon}$ \nodepart[text width=1cm]{second} $\{ A  \} $ \nodepart{third} \phantom{\cmark}  };
				\node[treenode,right=0.2cm of eps1] (r1) {$\boldsymbol{r}$ \nodepart[text width=1cm]{second} $\{ A  \} $ \nodepart{third} \xmark  };
				
				\node[treenode, below=0.5cm of eps1] (eps2) {$\boldsymbol{\epsilon}$ \nodepart[text width=1cm]{second} $\{ A,B  \} $ \nodepart{third} \phantom{\cmark}  };
				\node[treenode,right=0.2cm of eps2] (r2) {$\boldsymbol{r}$ \nodepart[text width=1cm]{second} $\{ A  \} $ \nodepart{third} \phantom{\xmark}  };
				
				\node[treenode, below=0.5cm of eps2] (eps3) {$\boldsymbol{\epsilon}$ \nodepart[text width=1cm]{second} $\{ A,B  \} $ \nodepart{third} \cmark  };
				\node[treenode,right=0.2cm of eps3] (r3) {$\boldsymbol{r}$ \nodepart[text width=1cm]{second} $\{ A  \} $ \nodepart{third} \phantom{\xmark}  };
				\node[treenode,right=0.2cm of r3] (s3) {$\boldsymbol{s}$ \nodepart[text width=1cm]{second} $\{ K  \} $ \nodepart{third} \phantom{\xmark}  };
				
				\node[treenode, below=0.5cm of eps3] (eps4) {$\boldsymbol{\epsilon}$ \nodepart[text width=1cm]{second} $\{ A,B  \} $ \nodepart{third} \cmark  };
				\node[treenode,right=0.2cm of eps4] (r4) {$\boldsymbol{r}$ \nodepart[text width=1cm]{second} $\{ A  \} $ \nodepart{third} \phantom{\xmark}  };
				\node[treenode,right=0.2cm of r4] (s4) {$\boldsymbol{s}$ \nodepart[text width=1cm]{second} $\{ K  \} $ \nodepart{third} \phantom{\xmark}  };
				\node[treenode,right=0.2cm of s4] (rr4) {$\boldsymbol{rr}$ \nodepart[text width=1cm]{second} $\{ A \}$ \nodepart{third} \xmark };
				
				\node[treenode, below=0.5cm of eps4] (eps5) {$\boldsymbol{\epsilon}$ \nodepart[text width=1cm]{second} $\{ A,B  \} $ \nodepart{third} \cmark  };
				\node[treenode,right=0.2cm of eps5] (r5) {$\boldsymbol{r}$ \nodepart[text width=1cm]{second} $\{ A, B  \} $ \nodepart{third} \xmark  };
				\node[treenode,right=0.2cm of r5] (s5) {$\boldsymbol{s}$ \nodepart[text width=1cm]{second} $\{ K  \} $ \nodepart{third} \phantom{\xmark}  };
				\node[treenode,right=0.2cm of s5] (rr5) {$\boldsymbol{rr}$ \nodepart[text width=1cm]{second} $\{ A \}$ \nodepart{third} \xmark };
				
				\node[treenode, below=0.5cm of eps5] (eps6) {$\boldsymbol{\epsilon}$ \nodepart[text width=1cm]{second} $\{ A,B  \} $ \nodepart{third} \cmark  };
				\node[treenode,right=0.2cm of eps6] (r6) {$\boldsymbol{r}$ \nodepart[text width=1cm]{second} $\{ A, B  \} $ \nodepart{third} \xmark  };
				\node[treenode,right=0.2cm of r6] (s6) {$\boldsymbol{s}$ \nodepart[text width=1cm]{second} $\{ K  \} $ \nodepart{third} \cmark  };
				\node[treenode,right=0.2cm of s6] (rr6) {$\boldsymbol{rr}$ \nodepart[text width=1cm]{second} $\{ A \}$ \nodepart{third} \xmark };
				\node[treenode,right=0.2cm of rr6] (ss6) {$\boldsymbol{ss}$ \nodepart[text width=1cm]{second} $\{ A \}$ \nodepart{third} \phantom{\xmark} };
				
				\node[treenode, below=0.5cm of eps6] (eps7) {$\boldsymbol{\epsilon}$ \nodepart[text width=1cm]{second} $\{ A,B  \} $ \nodepart{third} \cmark  };
				\node[treenode,right=0.2cm of eps7] (r7) {$\boldsymbol{r}$ \nodepart[text width=1cm]{second} $\{ A, B  \} $ \nodepart{third} \xmark  };
				\node[treenode,right=0.2cm of r7] (s7) {$\boldsymbol{s}$ \nodepart[text width=1cm]{second} $\{ K  \} $ \nodepart{third} \phantom{\xmark}  };
				\node[treenode,right=0.2cm of s7] (rr7) {$\boldsymbol{rr}$ \nodepart[text width=1cm]{second} $\{ A \}$ \nodepart{third} \xmark };
				\node[treenode,right=0.2cm of rr7] (ss7) {$\boldsymbol{ss}$ \nodepart[text width=1cm]{second} $\{ A,B \}$ \nodepart{third} \xmark };
				
				\node[treenode, below=0.5cm of eps7] (eps8) {$\boldsymbol{\epsilon}$ \nodepart[text width=1cm]{second} $\{ A,B  \} $ \nodepart{third} \phantom{\xmark}  };
				\node[treenode,right=0.2cm of eps8] (r8) {$\boldsymbol{r}$ \nodepart[text width=1cm]{second} $\{ A, B  \} $ \nodepart{third} \xmark  };
				\node[treenode,right=0.2cm of r8] (s8) {$\boldsymbol{s}$ \nodepart[text width=1cm]{second} $\{ K, L  \} $ \nodepart{third} \cmark  };
				\node[treenode,right=0.2cm of s8] (rr8) {$\boldsymbol{rr}$ \nodepart[text width=1cm]{second} $\{ A \}$ \nodepart{third} \xmark };
				\node[treenode,right=0.2cm of rr8] (ss8) {$\boldsymbol{ss}$ \nodepart[text width=1cm]{second} $\{ A, B \}$ \nodepart{third} \xmark };
				
				\node[treenode, below=0.5cm of eps8] (eps9) {$\boldsymbol{\epsilon}$ \nodepart[text width=1.4cm]{second} $\{ A,B,M  \} $ \nodepart{third} \cmark  };
				\node[treenode,right=0.2cm of eps9] (r9) {$\boldsymbol{r}$ \nodepart[text width=1cm]{second} $\{ A, B  \} $ \nodepart{third} \phantom{\xmark}  };
				\node[treenode,right=0.2cm of r9] (s9) {$\boldsymbol{s}$ \nodepart[text width=1cm]{second} $\{ K, L  \} $ \nodepart{third} \cmark  };
				\node[treenode,right=0.2cm of s9] (rr9) {$\boldsymbol{rr}$ \nodepart[text width=1cm]{second} $\{ A \}$ \nodepart{third} \phantom{\xmark} };
				\node[treenode,right=0.2cm of rr9] (ss9) {$\boldsymbol{ss}$ \nodepart[text width=1cm]{second} $\{ A, B \}$ \nodepart{third} \xmark };
				
				\node[blanknode, right=0.4cm of eps0] {$\completionstep{A \subsumed \forall r. A}{\boldsymbol{\epsilon}}$};
				\node[blanknode, right=0.4cm of r1] {$\completionstep{A \subsumed B}{\boldsymbol{\epsilon}}$};
				\node[blanknode, right=0.4cm of r2] {$\completionstep{A \subsumed \forall s. K}{\boldsymbol{\epsilon}}$};
				\node[blanknode, right=0.4cm of s3] {$\completionstep{A \subsumed \forall r. A}{\boldsymbol{r}}$};
				\node[blanknode, right=0.4cm of rr4] {$\completionstep{A \subsumed B}{\boldsymbol{r}}$};
				\node[blanknode, right=0.4cm of rr5] {$\completionstep{K \subsumed \forall s. A}{\boldsymbol{s}}$};
				\node[blanknode, right=0.4cm of ss6] {$\completionstep{A \subsumed B}{\boldsymbol{ss}}$};
				\node[blanknode, right=0.4cm of ss7] {$\completionstep{\forall s. B \subsumed L}{\boldsymbol{s}}$};
				\node[blanknode, right=0.4cm of ss8] {$\completionstep{\forall s. L \subsumed M}{\boldsymbol{\epsilon}}$};
						
				\node[blanknode,left=0.4cm of eps0] {$\Y_0$};
				\node[blanknode,left=0.4cm of eps1] {$\Y_1$};
				\node[blanknode,left=0.4cm of eps2] {$\Y_2$};
				\node[blanknode,left=0.4cm of eps3] {$\Y_3$};
				\node[blanknode,left=0.4cm of eps4] {$\Y_4$};
				\node[blanknode,left=0.4cm of eps5] {$\Y_5$};
				\node[blanknode,left=0.4cm of eps6] {$\Y_6$};
				\node[blanknode,left=0.4cm of eps7] {$\Y_7$};
				\node[blanknode,left=0.4cm of eps8] {$\Y_8$};
				\node[blanknode,left=0.4cm of eps9] {$\Y_9$};
				\end{tikzpicture}
		}
		\caption{Example run \label{fig:examplerun}}
	\end{figure}
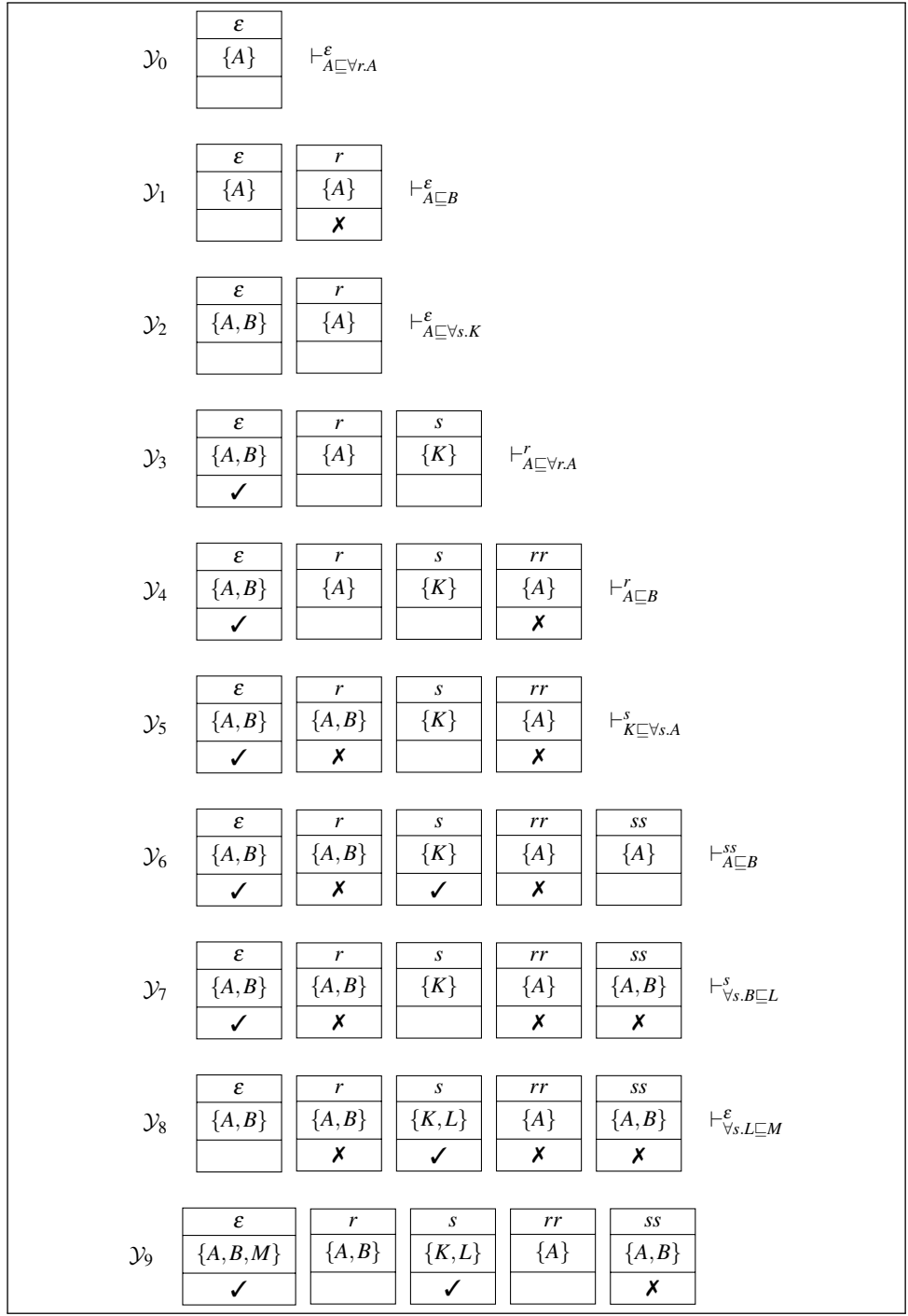
	We use a total order on $\NR^*$ that satisfies
	\begin{align*}
	\epsilon \prec r \prec s \prec rr \prec r s \prec s r \prec s s \prec r r r \prec \cdots
	\end{align*}
	and compute a sequence of completion steps for \SUBSpar{A}{M}{\T} sketched in Figure \ref{fig:examplerun}, where  
	\begin{itemize}
		\item[\xmark] marks blocked elements, and
		\item[\cmark] marks non-blocked elements not violating any GCI in \T. 
	\end{itemize}
	We write $\completionstep{A \subsumed \forall r. A}{\boldsymbol{\epsilon}}$ to denote the completion step that takes 
	$\boldsymbol{\epsilon}$ as a non-blocked element violating $A \subsumed \forall r. A$ and applies the expansion.
	Figure \ref{fig:examplerun} shows the first completion steps needed to obtain $M \in \Y_9(\epsilon)$, which yields $A \subsumedT M$. 
	For example, in $\Y_1$ the blocking condition \bRule{2} is used to block the node $\boldsymbol{r}$. In 
        $\Y_2$, $\boldsymbol{r}$ is no longer blocked since the label of $\boldsymbol{\epsilon}$ has been expanded. In
	$\Y_5$,  $\boldsymbol{r}$ gets again blocked since its label is expanded, and thus
        $\boldsymbol{rr}$ is indirectly blocked due to \bRule{3}. Also note that in $\Y_6$ we have $\boldsymbol{rr} \prec \boldsymbol{ss}$ 
	and both have the same label, but since $\boldsymbol{rr}$ is already blocked, \bRule{2} does not apply to $\boldsymbol{ss}$, which allows 
	us to do further completion steps needed to derive $A \subsumedT M$.
\end{example}	

Before we prove that the algorithm is sound and complete, we first show that the computed sequence is always finite, thus ensuring termination of the algorithm.
The \emph{depth} of a partial functional interpretation $\Y = (\domainof{\Y},\cdot^\Y)$, denoted by $\depthtreeof{\Y}$, is the maximum length of role words in $\domainof{\Y}$, i.e.,  $\depthtreeof{\Y} := \mathsf{max}(\{ \abs{\sigma} \mid \sigma \in \domainof{\Y}  \})$.

\begin{lemma}\label{lem:depthbound}
	If $\Z$ is a partial functional interpretation such that $\Y_0 \completionstepT^* \Z$, then
	$
	\depthtreeof{\Z} \leq 2^\abs{\NC} + 1.
	$
\end{lemma}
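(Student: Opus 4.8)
The plan is to bound the depth by counting, along a single root-to-leaf branch, how many nodes are forced to be \emph{non-blocked}, using the fact that non-blocked nodes carry pairwise distinct labels and that there are at most $2^{\abs{\NC}}$ possible labels.

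First I would establish two structural facts about every partial functional interpretation $\Y$ with $\Y_0 \completionstepT^* \Y$. \emph{(i) Distinct labels:} any two distinct non-blocked elements have different labels. Indeed, if $\sigma_1,\sigma_2 \in \notblocked{\Y}$ with $\sigma_1 \prec \sigma_2$ and $\Y(\sigma_1) = \Y(\sigma_2)$, then \bRule{2} applied with witness $\omega := \sigma_1$ (which is $\prec \sigma_2$ and non-blocked) would force $\sigma_2$ to be blocked, contradicting $\sigma_2 \in \notblocked{\Y}$. Since each label is a subset of $\NC$, this yields $\abs{\notblocked{\Y}} \le 2^{\abs{\NC}}$. \emph{(ii) Prefix-closedness of non-blocking:} if $\sigma \in \notblocked{\Y}$, then every $\rho \in \pprefixset{\sigma}$ is non-blocked as well; otherwise $\sigma$ would have a blocked proper prefix and hence be blocked by \bRule{3}. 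Thus the non-blocked elements on any branch form an initial segment of that branch.

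Next I would analyze the deepest element. Let $\tau \in \domainof{\Z}$ realize $\depthtreeof{\Z} =: k$, and assume $k \ge 1$ (the case $k = 0$ is trivial). Write the prefixes of $\tau$ as $\epsilon = \tau_0, \tau_1, \ldots, \tau_k = \tau$. Since new domain elements arise only as children created by an expansion at a non-blocked, incomplete element, the step that first introduces $\tau$ is an expansion at its parent $\tau_{k-1}$ in some intermediate interpretation $\Y_j$ of the computed sequence $\Y_0 \completionstepT \cdots \completionstepT \Z$; in particular $\tau_{k-1} \in \notblocked{\Y_j}$. By fact (ii), all of $\tau_0, \ldots, \tau_{k-1}$ lie in $\notblocked{\Y_j}$, and these are $k$ pairwise distinct elements. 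By fact (i) they carry $k$ pairwise distinct labels, so $k \le 2^{\abs{\NC}}$. Hence $\depthtreeof{\Z} = k \le 2^{\abs{\NC}} \le 2^{\abs{\NC}} + 1$, which even slightly improves the stated bound.

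The point requiring care, and the main obstacle, is that blocking is \emph{not} monotone along the completion sequence: a node may be non-blocked when its child is created and become blocked later, exactly as $\boldsymbol{r}$ does in the example once its label is enlarged, whereupon $\boldsymbol{rr}$ turns indirectly blocked but remains in the domain. This is why the counting cannot be performed in $\Z$ itself but must be anchored at the interpretation $\Y_j$ at the moment $\tau$ is added, where $\tau_{k-1}$ is guaranteed to be non-blocked. The remaining ingredients are routine: domains only grow and expansions only add concept names to labels, so the maximal depth is genuinely attained at an addition step of the form used above.
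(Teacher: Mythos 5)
Your proof is correct and follows essentially the same route as the paper's: both arguments anchor the counting at the interpretation in which the deepest node is created (where its parent must be non-blocked), and both use \bRule{3} to propagate non-blockedness to all prefixes and \bRule{2} to conclude that these prefixes carry pairwise distinct labels, of which there are at most $2^{\abs{\NC}}$. Your factoring into the two explicit facts (i) and (ii) is only a presentational variant of the paper's combined contrapositive, and your observation that the bound can be sharpened to $2^{\abs{\NC}}$ is accurate.
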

\begin{proof}
Let $\Y_0\completionstepT \Y_1 \completionstepT\ldots\completionstepT \Y_n=\Z$ be a sequence of expansions. 
We show for each $i$, $1\leq i\leq n$, that the length of words in $\domainof{\Y_i}$ is bounded by $2^\abs{\NC}+1$. 
A new element $\sigma\in\domainof{\Y_i}\setminus\domainof{\Y_{i-1}}$ is only added by the expansion at $\sigma$ of $\Y_{i-1}$ if 
$\sigma=\omega r$ and $\omega\in\notblocked{\Y_{i-1}}$. Now, $\omega\in\notblocked{\Y_{i-1}}$ is only possible if there exist 
no two distinct $\sigma_1,\sigma_2\in\prefixset{\omega}$ such that $\Ymc_{i-1}(\sigma_1)=\Ymc_{i-1}(\sigma_2)$. 
Otherwise, since either $\sigma_1\prec\sigma_2$ or $\sigma_2\prec\sigma_1$, one of these two nodes would be blocked by blocking condition~\bRule{2}, 
and $\omega$ would be blocked by condition~\bRule{3}. 
It follows that $\Y_{i-1}(\sigma_1)\neq\Y_{i-1}(\sigma_2)$ for every two distinct $\sigma_1$, $\sigma_2\in\prefixset{\omega}$, 
and consequently $\abs{\omega}\leq 2^{\abs{\NC}}$ and $\abs{\sigma}=\abs{\omega}+1\leq 2^\abs{\NC}+1$. 
Hence, $\abs{\sigma}\leq 2^{\abs{\NC}}+1$  for every $\sigma\in\domainof{\Z}$, which yields $\depthtreeof{\Z}\leq 2^\abs{\NC}+1$.
 \hfill \end{proof}

The upper bound on the depth of the tree in a \T-completion sequence also yields an upper bound on its overall size, 
since the outdegree of the tree is limited by $\big|\NR\big|$. 
Furthermore, we observe that $\Y \completionstepT \Y'$ implies that $\Y \subsetneq \Y'$, \ie a \T-completion always adds something 
and never removes anything. 
At the same time, each label set can  contain at most $\big| \NC \big|$ many names.
Thus, due to the depth bound, the bound on the outdegree,  and the upper bound on the label size, there cannot be an infinite sequence of \T-completions. Hence, $\SUBSpar{A_0}{B_0}{\T}$ always terminates. Note that we have used both blocking conditions, \bRule{2} and \bRule{3}, in the proof.

\begin{lemma}
$\SUBSpar{A_0}{B_0}{\T}$ always terminates.
\end{lemma}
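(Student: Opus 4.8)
The plan is to attach to each partial functional interpretation a non-negative integer measure that strictly increases with every $\T$-completion step while staying below a fixed bound depending only on $\abs{\NC}$ and $\abs{\NR}$. Since a strictly increasing sequence of natural numbers that is bounded above must be finite, this rules out any infinite completion sequence $\Y_0\completionstepT\Y_1\completionstepT\cdots$ and hence forces termination.

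First I would turn the depth bound of Lemma~\ref{lem:depthbound} into a bound on the number of nodes. Every reachable $\Z$ with $\Y_0\completionstepT^*\Z$ satisfies $\depthtreeof{\Z}\le 2^{\abs{\NC}}+1$, and since $\domainof{\Z}$ is a prefix-closed subset of $\NRstar$, it is a tree of outdegree at most $\abs{\NR}$. Hence $\abs{\domainof{\Z}}\le N$, where $N:=\sum_{k=0}^{2^{\abs{\NC}}+1}\abs{\NR}^{k}$ is finite. As each label $\Z(\sigma)$ is a subset of $\NC$, I would take
\[
\mu(\Y):=\abs{\domainof{\Y}}+\sum_{A\in\NC}\abs{A^\Y},
\]
which for every reachable $\Y$ is bounded by $N+\abs{\NC}\cdot N=N(1+\abs{\NC})$.

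The heart of the argument is monotonicity: I claim $\mu(\Y)<\mu(\Z)$ whenever $\Y\completionstepT\Z$. Inspecting the definition of an expansion shows $\domainof{\Z}\supseteq\domainof{\Y}$ and $A^\Z\supseteq A^\Y$ for every $A\in\NC$, so $\mu$ is non-decreasing. For strictness I would use that the expansion is applied at an element $\sigma$ that \emph{violates} the chosen GCI $C\subsumed D$ with right-hand side $D=A_1\sqcap\cdots\sqcap A_m\sqcap\forall r_1.B_1\sqcap\cdots\sqcap\forall r_n.B_n$, so $\sigma\notin\match{D}{\Y}$. Because matching a conjunction means matching each conjunct, $\sigma$ must fail at least one: either $\sigma\notin A_i^\Y$ for some $i$, and then putting $\sigma$ into $A_i^\Z$ strictly enlarges $A_i^\Z$; or $\sigma r_j\notin B_j^\Y$ for some $j$, and then the expansion either introduces the fresh node $\sigma r_j$ into the domain or adds $B_j$ to its label. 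In each case some summand of $\mu$ strictly grows, giving $\mu(\Y)<\mu(\Z)$.

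The only place that needs care is reading this violation condition correctly through Definition~\ref{def:match}: a value restriction $\forall r_j.B_j$ fails exactly when $\sigma r_j\notin B_j^\Y$, and this single condition already subsumes both the case where $\sigma$ has no $r_j$-successor in $\domainof{\Y}$ and the case where it has one whose label lacks $B_j$. One should also note that a step need not make $\sigma$ satisfy all of $D$ at once; repairing one defect already suffices for strict growth. With these three ingredients in place, every completion sequence yields strictly increasing values $\mu(\Y_0)<\mu(\Y_1)<\cdots$ capped at $N(1+\abs{\NC})$, which must therefore be finite. Hence $\SUBSpar{A_0}{B_0}{\T}$ terminates.
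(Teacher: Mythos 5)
Your proposal is correct and follows essentially the same route as the paper: it combines the depth bound of Lemma~\ref{lem:depthbound} with the outdegree bound $\abs{\NR}$ to bound the tree size, observes that every $\T$-completion step strictly adds something (either a domain element or a concept-name membership), and concludes that no infinite sequence is possible. Your explicit measure $\mu$ and the careful case analysis of which conjunct of the violated right-hand side must be repaired merely make precise the paper's remark that $\Y\completionstepT\Y'$ implies $\Y\subsetneq\Y'$.
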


Note, however, that our termination argument only yields a double-exponential bound on the run time of the algorithm. 
The reason is that Lemma~\ref{lem:depthbound} only shows an exponential bound on the \emph{depth} of the generated trees,
and thus only a double-exponential bound on the \emph{size} of these trees. At the moment, it is not clear
whether one can construct examples where the algorithm only terminates after an double-exponential number of steps,
but we also do not have a proof that it always terminates in exponential time. 
Thus, we currently do not know whether the algorithm is worst-case optimal or not. 
However, our experimental evaluation shows that it works reasonably well in practice.

It remains to show that $\SUBSpar{A_0}{B_0}{\T}$  always computes the correct result, i.e., that it is sound and complete. 
The following lemma is crucial for proving this.

\begin{lemma}\label{lem:relationCanonicalModel}
 Let $\Y_0$ be as in~\eqref{eq:inittree} and $\Y$ be a partial functional interpretation that is reachable from $\Y_0$ and complete,
that is, $\Y_0\completionstepT^*\Y$ and $\incomp{\Y,\T}\cap\notblocked{\Y}=\emptyset$.
Then there is a functional model $\I$ of $\T$ such that $\Y(\epsilon)=\I(\epsilon)$.
\end{lemma}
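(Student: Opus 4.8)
The plan is to \emph{unravel} the finite complete tree $\Y$ into an infinite functional model $\I$ on the full domain $\NRstar$, using the blocking information to decide, for each position in $\NRstar$, which node of $\Y$ it should imitate. Since the only freedom in a functional interpretation is the interpretation of concept names, it suffices to specify the label of every $\sigma\in\NRstar$.

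First I would define a \enquote{folding} map $f\colon\NRstar\to\notblocked{\Y}\cup\{\ast\}$ by recursion on word length, where $\ast$ marks positions that will receive the empty label. Set $f(\epsilon):=\epsilon$, which is non-blocked by \bRule{1}. Given $\sigma$ with $f(\sigma)=\tau\in\notblocked{\Y}$, for each $r\in\NR$ I distinguish three cases according to the $r$-child $\tau r$ of $\tau$ in $\Y$: if $\tau r\in\domainof{\Y}$ is non-blocked, put $f(\sigma r):=\tau r$; if $\tau r\in\domainof{\Y}$ is blocked, put $f(\sigma r):=\omega$ for a fixed non-blocked witness $\omega$ with $\Y(\omega)=\Y(\tau r)$; and if $\tau r\notin\domainof{\Y}$, set $f(\sigma r):=\ast$, with $\ast$ propagated to all descendants. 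The model $\I$ is then the functional interpretation whose label at $\sigma$ is $\Y(f(\sigma))$ when $f(\sigma)\neq\ast$ and $\emptyset$ otherwise. Two facts make this well-defined: $f$ never leaves the non-blocked nodes (the recursion records either a non-blocked $\tau r$ or a non-blocked witness $\omega$ supplied by \bRule{2}); and a blocked $r$-child $\tau r$ of a non-blocked $\tau$ can only be \emph{directly} blocked, so \bRule{2} really does supply such an $\omega$. Indeed, since $\tau$ is non-blocked, none of its prefixes is blocked (otherwise \bRule{3} would block $\tau$), so $\tau r$ has no blocked proper prefix and \bRule{3} cannot apply to it.

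The heart of the argument is the correspondence: for every $\sigma$ with $f(\sigma)=\tau\neq\ast$ and every normal-form concept $E$, I claim $\sigma\in E^{\I}$ iff $\tau\in\match{E}{\Y}$. This I prove along the (shallow) structure of normal-form concepts. For a concept name $A$ it is immediate from $\I(\sigma)=\Y(\tau)$. For a value restriction $\forall r.A$ it reduces to the single-step identity $\sigma r\in A^{\I}\Leftrightarrow\tau r\in A^{\Y}$, which holds by inspecting the three cases defining $f(\sigma r)$: in the two cases with $\tau r\in\domainof{\Y}$ one has $\I(\sigma r)=\Y(\tau r)$, and in the $\ast$-case both sides are false since $\tau r\notin\domainof{\Y}$. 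Conjunction is then routine. Model-hood of $\I$ follows at once: if $\sigma\in C^{\I}$ for a GCI $C\subsumed D\in\T$, then $\tau\in\match{C}{\Y}$, and since $\tau$ is non-blocked and $\Y$ is complete, $\tau$ does not violate $C\subsumed D$, so $\tau\in\match{D}{\Y}$, whence $\sigma\in D^{\I}$. Positions with $f(\sigma)=\ast$ carry the empty label and have only empty children, so they satisfy every GCI vacuously: the left-hand side of a normal-form \FLnull GCI is a \emph{non-empty} conjunction of concept names and value restrictions, each of which fails at such a position. Finally, $f(\epsilon)=\epsilon$, so $\I(\epsilon)=\Y(\epsilon)$, as required.

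The step I expect to be the main obstacle is getting the folding map exactly right at the blocked and missing-child nodes: justifying that a blocked child of a non-blocked node is always directly blocked (so that a same-label non-blocked witness exists) and verifying that the empty $\ast$-subtrees introduce no GCI violations, which is precisely where the non-emptiness of left-hand sides in the normal form is essential.
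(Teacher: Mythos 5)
Your proposal is correct and follows essentially the same route as the paper's proof: the folding map $f$ (with $\ast$ for missing children) is exactly the paper's mapping $m$ into $\notblocked{\Y}\cup\{d_\top\}$, including the key observations that a blocked child of a non-blocked node can only be directly blocked and that the match-correspondence plus completeness yields modelhood. Your treatment of the empty-label positions via the non-emptiness of normalized left-hand sides is, if anything, slightly more explicit than the paper's.
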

\begin{proof}
We extend $\Y$ to a functional interpretation $\I$ such that\ $\Y(\epsilon)=\I(\epsilon)$.
Note that, in $\Y$, even non-blocked nodes $\sigma$ need not have $r$-successors for all $r\in \NR$. This is the case if there is no GCI 
that requires generating an $r$-successor for $\sigma$.  In the least functional model, the successor $\sigma r$ exists, but it has label $\emptyset$.
We will represent such successors by a dummy node $d_\top$ with an empty label in our construction.

To construct $\I$, we first define a mapping $m:\NRstar\rightarrow\notblocked{\Y}\cup\{d_\top\}$ by induction on the length of $\sigma\in\NRstar$ as follows: 
\\[-\medskipamount] \mbox{ } \hfill \parbox[t]{0.935\textwidth}{
	\begin{itemize}
         \item By definition, $\epsilon$ is not blocked, and thus we can set $m(\epsilon) = \epsilon$.
         \item Now, consider a node $\sigma r$ of length $> 0$,  and assume that $m(\sigma)$ is already defined.
               We distinguish two cases:
         \begin{itemize}
          \item
               Assume that $m(\sigma) r\in\domainof{\Y}$. Note that this node cannot be indirectly blocked since
               $m(\sigma)$ is then a node in $\domainof{\Y}$ that is not blocked.
               Thus, 
               there exists $\sigma'\in\notblocked{\Y}$ such that $\Y(\sigma')=\Y(m(\sigma) r)$. 
               We set $m(\sigma r)=\sigma'$.
	 
	 \item If $m(\sigma) r\not\in\domainof{\Y}$, then we set $m(\sigma r)=d_\top$.
        \end{itemize}
        \end{itemize}
}
\smallskip \noindent
Based on $m$ and $\Y$, we define the functional interpretation $\I$ by setting 
$$A^\I=\{\sigma\mid m(\sigma)\in A^\Y\}\ \ \ \mbox{for all $A\in\NC$.}$$
It follows from Definition~\ref{def:match} that, for every $\sigma\in\NRstar$ and every \FLnull concept $C$ in normal form, 
if $m(\sigma)$ matches $C$ in $\Y$, then $\sigma\in C^\I$. In fact, assume that $m(\sigma)$ matches $C$.
If $A$ is a conjunct in $C$, then $m(\sigma)\in A^\Y$, and thus
$\sigma\in A^\I$. If $\forall r.A$ is a conjunct in $C$, then $m(\sigma) r\in A^\Y$. This implies $m(\sigma) r\in\domainof{\Y}$,
and thus $m(\sigma r)$ satisfies $\Y(m(\sigma r))=\Y(m(\sigma) r)$, which yields $m(\sigma r)\in A^\Y$, and thus $\sigma r\in A^\I$.
This shows $\sigma\in (\forall r.A)^\I$. 

The other direction also holds. Assume that $\sigma\in C^\I$. If $A$ is a conjunct in $C$, then 
$\sigma\in A^\I$ implies $m(\sigma)\in A^\Y$. If $\forall r.A$ is a conjunct in $C$, then $\sigma\in (\forall r.A)^\I$ 
implies $\sigma r\in A^\I$, and thus $m(\sigma r)\in A^\Y$. Consequently,
$A\in \Y(m(\sigma r)) = \Y(m(\sigma)r)$ yields $m(\sigma)r\in A^\Y$, which completes the proof that $m(\sigma)$ matches $C$

We are now ready to show that $\I$ is a model of $\T$, that is, for every $C\sqsubseteq D\in\T$ and $\sigma\in C^\I$, also $\sigma\in D^\I$ holds.
Thus, assume $C\sqsubseteq D\in\T$ and $\sigma\in C^\I$. The latter implies that $m(\sigma)$ matches $C$. This is only possible if
$m(\sigma)\neq d_\top$. Thus, $m(\sigma)\in\notblocked{\Y}$ and since $\Y$ is complete, 
$m(\sigma)\not\in\incomp{\Y}$. Consequently, $m(\sigma)$ matches $D$, which yields $\sigma\in D^\I$. 
 \hfill \end{proof}

\begin{theorem}
 $\SUBSpar{A_0}{B_0}{\T}$ is sound and complete, that is, it outputs \enquote{yes} iff $A_0\sqsubseteq_\T B_0$.
\end{theorem}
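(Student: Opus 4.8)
The plan is to show that, for the complete partial functional interpretation $\Y_n$ computed by the algorithm, the root label $\Y_n(\epsilon)$ coincides with the root label $\canmodAnullT(\epsilon)$ of the least functional model of $A_0$ \wrt \T. Once this is established, the claim follows immediately: by Theorem~\ref{sub:char:least-functional} we have $A_0\subsumedT B_0$ iff $\epsilon\in B_0^{\canmodAnullT}$, i.e.\ iff $B_0\in\canmodAnullT(\epsilon)$, while the algorithm answers \enquote{yes} precisely when $B_0\in\Y_n(\epsilon)$. I would prove the two inclusions $\Y_n(\epsilon)\subseteq\canmodAnullT(\epsilon)$ (yielding soundness) and $\canmodAnullT(\epsilon)\subseteq\Y_n(\epsilon)$ (yielding completeness) separately.

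For soundness, I would establish the stronger invariant $\Y_i\subseteq\canmodAnullT$ for every interpretation $\Y_i$ in the completion sequence $\Y_0\completionstepT\Y_1\completionstepT\cdots\completionstepT\Y_n$, by induction on $i$. The base case holds because $A_0^{\Y_0}=\{\epsilon\}$ and $\epsilon\in A_0^{\canmodAnullT}$ (as $\canmodAnullT$ is a functional model of $A_0$), while all other concept names are empty in $\Y_0$. For the inductive step, suppose $\Y_i$ is obtained from $\Y_{i-1}$ by expanding a GCI $\alpha=C\subsumed D$ at a violating element $\sigma$, where $D=A_1\sqcap\cdots\sqcap A_m\sqcap\forall r_1.B_1\sqcap\cdots\sqcap\forall r_n.B_n$. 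Since $\sigma$ violates $\alpha$, we have $\sigma\in\match{C}{\Y_{i-1}}$. The key observation is that matching a conjunct $\forall r.A$ of $C$ requires, by Definition~\ref{def:match}, that the successor $\sigma r$ already lies in $\domainof{\Y_{i-1}}$ with $\sigma r\in A^{\Y_{i-1}}$; together with the induction hypothesis $\Y_{i-1}\subseteq\canmodAnullT$ and the fact that $\sigma r$ is the unique $r$-successor of $\sigma$ in the functional model $\canmodAnullT$, this yields $\sigma\in(\forall r.A)^{\canmodAnullT}$. Handling the concept-name conjuncts analogously, we obtain $\sigma\in C^{\canmodAnullT}$. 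As $\canmodAnullT\models\T$, it follows that $\sigma\in D^{\canmodAnullT}$, hence $\sigma\in A_i^{\canmodAnullT}$ for all $i$ and $\sigma r_j\in B_j^{\canmodAnullT}$ for all $j$. These are exactly the facts added by the expansion, so $\Y_i\subseteq\canmodAnullT$, and in particular $\Y_n(\epsilon)\subseteq\canmodAnullT(\epsilon)$.

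For completeness, I would invoke Lemma~\ref{lem:relationCanonicalModel} on the complete interpretation $\Y_n$ to obtain a functional model $\I$ of $\T$ with $\I(\epsilon)=\Y_n(\epsilon)$. Because a completion step never removes concept names from a label, the initial fact $A_0\in\Y_0(\epsilon)$ propagates to $A_0\in\Y_n(\epsilon)=\I(\epsilon)$, so $\epsilon\in A_0^{\I}$ and $\I$ is in fact a functional model of $A_0$ \wrt \T. By minimality of the least functional model we then have $\canmodAnullT\subseteq\I$, whence $\canmodAnullT(\epsilon)\subseteq\I(\epsilon)=\Y_n(\epsilon)$.

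Combining the two inclusions gives $\Y_n(\epsilon)=\canmodAnullT(\epsilon)$, and the theorem follows as explained above. I expect the main obstacle to be the inductive step of the soundness invariant, and specifically the value-restriction case: one has to exploit carefully that the algorithm's matching relation $\match{\cdot}{\Y}$ only fires when the required successor is already present in the finite tree, so that a satisfied value restriction in the partial interpretation genuinely transfers to the (totally defined) least functional model. The completeness direction is comparatively light, since Lemma~\ref{lem:relationCanonicalModel} already does the work of turning the finite complete tree into an actual model via the relabeling map that witnesses it.
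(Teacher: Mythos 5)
Your proof is correct and follows essentially the same route as the paper's: both directions rest on the invariant $\Y_i\subseteq\canmodAnullT$ (which the paper only asserts informally, whereas you carry out the induction) together with the model $\I$ supplied by Lemma~\ref{lem:relationCanonicalModel}. The only cosmetic difference is in the completeness direction, where the paper uses $\I$ directly as a countermodel when the answer is \enquote{no}, while you instead invoke minimality of $\canmodAnullT$ to obtain $\canmodAnullT(\epsilon)\subseteq\Y_n(\epsilon)$ and then conclude via Theorem~\ref{sub:char:least-functional}; both arguments are equivalent.
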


\begin{proof}
Assume that the algorithm has generated a complete partial functional interpretation $\Y$ such that  $\Y_0\completionstepT^*\Y$.
Lemma~\ref{lem:relationCanonicalModel} yields a model $\I$ of $\T$ such that $\I(\epsilon) = \Y(\epsilon)$.

If $\SUBSpar{A_0}{B_0}{\T}$ outputs \enquote{no}, then $B_0\not\in \Y(\epsilon)$. 
Since $A_0\in \Y(\epsilon) = \I(\epsilon)$ and $B_0\not\in \Y(\epsilon) = \I(\epsilon)$, the model $\I$ of $\T$ yields a counterexample
to the subsumption relation $A_0\sqsubseteq_\T B_0$  because this implies $\epsilon\in A_0^\I\setminus B_0^\I$.

If $\SUBSpar{A_0}{B_0}{\T}$ outputs \enquote{yes}, then $B_0\in \Y(\epsilon)$. It is easy to see that $Y(\sigma) \subseteq \canmodAnullT(\sigma)$
holds for all $\sigma\in\NR^*$. In fact, one can generate $\canmodAnullT$ from $\Y_0$ by an infinite number of completion steps that also
are applied to blocked nodes. Thus, whatever is added in the sequence $\Y_0\completionstepT^*\Y$ is also present in
$\canmodAnullT$. But then $B_0\in \Y(\epsilon)$ yields $B_0\in \canmodAnullT(\epsilon)$, and this implies  $A_0\sqsubseteq_\T B_0$ by Theorem~\ref{sub:char:least-functional}.
 \hfill \end{proof}

The algorithm $\SUBSpar{A_0}{B_0}{\T}$ shares properties with the completion method for \EL \cite{BaBL05} as well as with  
tableau algorithms for expressive DLs \cite{DBLP:journals/sLogica/BaaderS01}. Every single \T-completion step extends the label set of 
at least one node in the tree. Intuitively, adding the concept name $A$ to the label set of domain element $\sigma$ corresponds to deriving $A_0 \subsumed \forall \sigma. A$ as a consequence of $\T$. A single run of $\SUBSpar{A_0}{B_0}{\T}$ not only decides whether $A_0 \subsumed B_0$ is entailed by $\T$ but computes \emph{all} named subsumers of $A_0$. This is similar to the \EL completion method and other consequence-based calculi \cite{DBLP:conf/ijcai/SimancikKH11}. From tableau algorithms $\SUBSpar{A_0}{B_0}{\T}$ inherits the blocking mechanism that ensures termination.

\section{Horn and other fragments of \FLnull}\label{sect:horn}

\newcommand{\HornFLnull}{\text{Horn-}\FLnull}
\newcommand{\HornFLnullBot}{\text{Horn-}\FLnullBot}

\newcommand{\tup}[1]{(#1)}

Based on the algorithm presented in the last section, we show that subsumption 
between \FLnull concepts becomes tractable if one restricts to the 
Horn logic \HornFLnull introduced in~\cite{DBLP:conf/aaai/KrotzschRH07}. We then consider some extensions.
In \HornFLnull, every GCI is of one of the following forms:
\begin{gather}
 A\sqsubseteq C \quad A\sqcap B\sqsubseteq C \quad A\sqsusbeteq\forall r.B,
 \label{eq:horn-flnull-axioms}
\end{gather}
where $A,B,C\in\NC$ and $r\in\NR$. Our definition differs slightly from that 
in~\cite{DBLP:conf/aaai/KrotzschRH07}, in that they allow $\top$ and 
$\bot$ to be used both in $\FLnull$ and $\HornFLnull$. To see that this is not 
a major restriction, we note that for the extension of $\HornFLnull$ that uses 
$\top$ and $\bot$ anywhere where a concept is used, the reduction presented 
in Section~\ref{ssec:reductions} can still be used to obtain a TBox fully in 
$\HornFLnull$ as it is presented here.

\cite{DBLP:conf/aaai/KrotzschRH07} only show the complexity for knowledge base 
consistency, which is 
\PTime-complete in \HornFLnull. We improve upon these results by showing that 
subsumption between arbitrary \FLnull concepts with respect to a \HornFLnull 
TBox is tractable as well. Note that, whereas for \FLnull, subsumption between 
concepts can be reduced to knowledge base consistency, the restricted 
expressivity of \HornFLnull does not allow for this in the general case. 

\newcommand{\HornSROIQ}{\text{Horn-}\ensuremath{\mathcal{SROIQ}}\xspace}
\newcommand{\ALCI}{\ensuremath{\mathcal{ALCI}}\xspace}

\begin{theorem}\label{the:horn-fl0}
 Concept subsumption of \FLnull concepts with respect to general \HornFLnull TBoxes is \PTime-complete.
\end{theorem}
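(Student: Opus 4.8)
For hardness, the standard route is a reduction from a known PTime-complete problem. The natural candidate is subsumption (or knowledge base consistency) in plain $\HornFLnull$, or alternatively Horn-SAT / the reachability-based P-completeness of propositional Horn implication. Since \cite{DBLP:conf/aaai/KrotzschRH07} establish that knowledge base consistency in $\HornFLnull$ is already PTime-complete, the cleanest argument is to reduce that problem, or propositional Horn entailment, to concept subsumption w.r.t.\ a $\HornFLnull$ TBox. Encoding propositional Horn clauses $p_1 \sqcap \cdots \sqcap p_k \sqsubseteq q$ directly as GCIs of the form $A \sqcap B \sqsubseteq C$ (after currying multi-premise clauses through fresh conjunction names) gives $\PTime$-hardness almost immediately, since deciding whether a goal atom follows from a set of definite Horn clauses is the prototypical P-complete problem.

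\textbf{The membership direction is the main content, and here I would invoke the algorithm of Section~\ref{sec:subs}.} By Proposition~\ref{prop:normal-form} and Lemma~\ref{lem:fl0bot} we may assume the TBox is a normalized $\HornFLnull$ TBox and that we are deciding $A_0 \sqsubseteq_\T B_0$ between concept names; subsumption between complex concepts reduces to this in linear time. The key observation is that for Horn TBoxes the least functional model is \emph{deterministic}: because every right-hand side in \eqref{eq:horn-flnull-axioms} is a single atom (a concept name, or a value restriction $\forall r.B$ with one successor concept), the expansion step never forces a choice, and every node in $\canmodAnullT$ carries a uniquely determined label. I would argue that under the Horn restriction the blocking-based algorithm $\SUBSpar{A_0}{B_0}{\T}$ can be replaced (or restricted) so that it runs in polynomial time, and that this is exactly the claim the text promises at the start of Section~\ref{sect:horn} (\enquote{for Horn-\FLnull, our algorithm can be restricted such that it runs in polynomial time}).

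\textbf{The hard part will be taming the double-exponential blow-up that the generic depth bound (Lemma~\ref{lem:depthbound}) permits.} In the general case, labels can grow as value restrictions accumulate, giving exponentially many distinct labels and hence exponential depth. The plan is to exploit that Horn GCIs only ever \emph{propagate} single atoms along roles, so the relevant information at a node is not its full label but which atoms are forced; one can then show that it suffices to maintain, for each pair $(A, \sigma)$ with $\sigma$ a word of bounded length, whether $A_0 \sqsubseteq_\T \forall\sigma.A$ holds, and that these implications can be saturated by a fixpoint computation over a polynomially-sized set of \emph{states}. Concretely, I would define a state as a set of concept names reachable along a role word, observe via the determinism that the successor state under each role $r$ is a \emph{function} of the current state, and argue that the number of reachable states along any branch is polynomially bounded because each successor state is computed by closing a single atom under the clauses rather than by unioning independent derivations. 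This collapses the automaton underlying $\canmodAnullT$ to polynomial size, so a straightforward least-fixpoint saturation (in the spirit of the \EL completion algorithm \cite{BaBL05}) computes all named subsumers of $A_0$ in polynomial time, and one reads off whether $B_0 \in \canmodAnullT(\epsilon)$ by Theorem~\ref{sub:char:least-functional}. Verifying that the reachable states stay polynomial, rather than merely that each state is polynomially describable, is the crux of the argument and where the Horn restriction must be used decisively.
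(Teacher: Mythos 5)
Your hardness argument matches the paper's (a direct encoding of propositional Horn entailment as GCIs $A\sqcap B\sqsubseteq C$ after currying), so that half is fine. The membership half, however, has two genuine gaps. First, the reduction to subsumption between concept names that you invoke at the outset does not go through: internalizing the right-hand concept $D$ requires adding the axiom $D\sqsubseteq B_0$, and when $D$ contains value restrictions this axiom has a value restriction on its left-hand side, which is not expressible in \HornFLnull. The paper makes exactly this point and therefore only internalizes the \emph{left}-hand concept (adding $A_0\sqsubseteq C$, which normalizes into \HornFLnull axioms) while keeping $D$ explicit and testing $\epsilon\in D^{\mathcal{Z}}$ at the end.

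Second, the step you yourself identify as the crux---that the set of node labels reachable along a branch of $\canmodAnullT$ is polynomially bounded---is both unproven and the wrong target. The determinism you appeal to (each node's label is a function of its parent's label) holds for \emph{every} \FLnull TBox, not just Horn ones, and says nothing about how many distinct labels arise; a priori there can be exponentially many, and nothing in your sketch rules this out. What the Horn restriction actually buys is different: since left-hand sides of \HornFLnull GCIs contain no value restrictions, whether a GCI fires at a node never depends on the labels of its successors, so information flows only top-down. The paper exploits this by seeding the initial partial interpretation with the polynomially many prefix nodes of the words $\sigma_1,\dots,\sigma_n$ occurring in $D=\forall\sigma_1.A_1\sqcap\cdots\sqcap\forall\sigma_n.A_n$ and applying expansions \emph{only} at these skeleton nodes; completeness is immediate because no GCI application at a non-skeleton node could ever change a skeleton label, and the number of expansion steps is bounded by the number of pairs of a GCI and a skeleton node, hence polynomial. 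Your fixpoint-over-states construction could perhaps be salvaged by observing that one only needs to iterate the successor function to depth $\max_i|\sigma_i|$, but as written the argument rests on an unestablished (and likely false) combinatorial claim.
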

\begin{proof}
 {Hardness follows easily from \PTime-hardness of satisfiability of 
 propositional Horn formulae. Specifically, given a Horn formulae $\Phi$ over propositional variables $\{p_1,\ldots,p_m\}$, we associate to each variable $p_i$ a concept name $A_i$, translate clauses $p_{i_1}\wedge\ldots\wedge p_{i_m}\rightarrow p_j$ to GCIs $A_0\sqcap A_{i_1}\sqcap\ldots\sqcap A_{i_m}\sqsubseteq A_j$, and clauses
 $p_{i_1}\wedge\ldots\wedge p_{i_m}\rightarrow\bot$ to $A_0\sqcap A_{i_1}\sqcap\ldots A_{i_m}\sqsubseteq B_0$. Then, we transform these GCIs into ones
with only binary conjunction on the left-hand sides by introducing auxiliary concept names. It is easy to see that the resulting TBox entails $A_0\sqsubseteq B_0$ iff $\Phi$ is unsatisfiable.}

 For inclusion in \PTime, we modify the procedure described in Section~\ref{sec:subs}.
In contrast to that procedure, we cannot 
reduce 
subsumption of the form $C\sqsubseteq_\T D$ to subsumptions of the form 
$A_0\sqsubseteq_\T B_0$, since the axiom $D\sqsubseteq B_0$ need not be expressible in \HornFLnull. 
However, we can restrict ourselves to 
subsumptions of the form $A_0\sqsubseteq_\T D$, where $A_0\in\NC$, as for 
subsumptions $C\sqsubseteq D$, we can add the axiom $A_0\sqsubseteq C$ to the 
original TBox, which after normalization becomes an $\HornFLnull$ TBox $\T$
{that entails $A_0\sqsubseteq D$ iff the original ontology entails $C\sqsubseteq D$.}

To decide $A_0\sqsubseteq_\T D$ in polynomial time, we apply 
the algorithm described in Section~\ref{sec:subs} with two modifications:
\\[-\medskipamount] \mbox{ } \hfill 		
	\parbox[t]{0.935\textwidth}{
	\begin{enumerate}
	 \item the initial partial functional interpretation $\Y_0$ already contains several nodes which 
	serve 
	as a \enquote{skeleton} of $D$, and 
	 \item expansions are only applied on nodes from that skeleton. 
	\end{enumerate}
	} 

\smallskip \noindent
\noindent
Specifically, for $D=\forall \sigma_1.A_1\sqcap\ldots\sqcap\forall 
\sigma_n.A_n$, the 
initial partial functional interpretation $\Y_0$ is now defined as 
follows:
\[
\Delta^{\Y_0}=\bigcup_{1\leq i\leq n}\prefixset{\sigma_i}
\qquad
  A_0^{\Y_0}=\{\epsilon\} \qquad B^{\Y_0}=\emptyset \text{ for all } 
B\in\NC\setminus\{A_0\}.
\]
Furthermore, expansions are only applied on nodes $\sigma\in\Delta^{\Y_0}$, 
that is, 
new nodes may be introduced, but they are not further expanded. This 
restriction makes every completion sequence polynomially bounded{, because} we have 
at most one step per pair $(\alpha, \sigma)\in\T\times\Delta^{\Y_0}$. For the 
final 
interpretation $\Z$, we check whether {$\sigma_i\in A_i^\Z$} for all $1\leq i\leq n$, 
which corresponds to checking whether $\epsilon\in D^\Z$. To show that the 
resulting method is still sound and complete, we show that for the 
least functional model $\I_{A,\T}$, we have for every 
$\sigma\in\domainof{\Y_0}$ that {$\Z(\sigma)=I_{A,\T}(\sigma)$}.
For this, it suffices to show that, for every $d\in\Y^0$ and $C'\sqsubseteq 
D'\in\T$, {$\sigma\in (C')^\Z$} implies {$\sigma\in (D')^\Z$}. Since $\T$ is in 
\HornFLnull, 
$C'$ 
does not contain universal role restrictions. Consequently, if 
$\sigma\in\match{C'}{\Z}$, the expansion {already made} sure that 
$\sigma\in\match{D'}{\Z}$ and consequently that $\sigma\in (D')^\Z$. It follows 
that 
$\Z(\sigma)=I_{A,\T}(\sigma)$ for all $\sigma\in\domainof{\Y_0}$. This means 
that $A\sqsubseteq_\T D$ iff 
$\epsilon\in D^\Z$. Our method runs in polynomial time and is sound 
and complete, and thus subsumption with \HornFLnull-TBoxes can be 
decided in polynomial time. \hfill
\end{proof}

\begin{remark}\label{rem:horn}
The proof of Theorem~\ref{the:horn-fl0} uses the fact that we only need to 
consider role-successors of roles that occur on the left-hand side of a GCI 
(in case of \HornFLnull there are no such roles to consider). We use this 
observation in an optimization of \flower to improve reasoning times.
\end{remark}

\newcommand{\HornFLnullBotPlus}{\ensuremath{\text{Horn-}\mathcal{FL}_\bot^+}\xspace}

\newcommand{\HornFLminus}{Horn-$\mathcal{FL}^-$\xspace}

{For many DLs, such as \ALC and \ALCI, it is common to define their Horn-fragments as their intersection with \HornSROIQ. If we define \HornFLnullBot in this way, we obtain a DL in which value restrictions can occur on the left-hand side in axioms of the form  
$A\sqcap\forall r.B\sqsubseteq\bot$, where $A,B\in\NC$ and $r\in\NR$. Specifically, in}
%
\HornFLnullBot, every axiom is of the form
\begin{align}
 A\sqsubseteq B\qquad A\sqcap B\sqsubseteq C \qquad
 A\sqsubseteq\forall r.A \qquad A\sqcap \forall r.B\sqsubseteq\bot,
 \label{eq:horn-flnullbot-axioms}
\end{align}
where {$A,B,C\in\NC\cup\{\top,\bot\}$} and $r\in\NR$. 

\begin{theorem}
 Subsumption between concept names is \PSpace-complete for \HornFLnullBot. 
\end{theorem}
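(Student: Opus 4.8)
The plan is to prove the two bounds separately. For the \emph{upper bound}, I would first apply the \FLbot-to-\FLnull reduction of Lemma~\ref{lem:fl0bot}: deciding $A_0\subsumedT B_0$ in \HornFLnullBot amounts to deciding whether $B_0$ belongs to the label of the root $\varepsilon$ in the least functional model $\I_{A_0,\FLnull(\T)}$ (Theorem~\ref{sub:char:least-functional}), where $\FLnull(\T)$ replaces $\bot$ by a concept name $A_\bot$ subject to $A_\bot\sqsubseteq B$ for every $B$. The decisive structural observation is that, because $\T$ is Horn, downward propagation along roles is \emph{deterministic} even in $\FLnull(\T)$: the label sent from a node to its $r$-successor, followed by closure under the rules $A\sqsubseteq B$ and $A\sqcap B\sqsubseteq C$, is a function $\delta(\cdot,r)$ of the node's own label alone. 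Hence the ``pure'' label of a node (its label before any $A_\bot$ is added) depends only on the word leading to it, so these labels are the states of a deterministic automaton with at most $2^{\abs{\NC}}$ states.

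The only source of genuine recursion is the clash axioms $A\sqcap\forall r.B\sqsubseteq A_\bot$, which put $A_\bot$ --- and hence every concept name --- into a label and propagate this ``fullness'' upward along value restrictions. I would prove a lemma stating that the root receives $A_\bot$ iff there is a path $q_0,r_1,q_1,\dots,r_k,q_k$ in this automaton, with $q_0$ the rule-closure of $\{A_0\}$ and $q_i=\delta(q_{i-1},r_i)$, such that each step is licensed by a clash axiom whose left concept lies in $q_{i-1}$ and the last node clashes directly. The subtle point --- and the main obstacle on this side --- is justifying that the \emph{pure} labels may be used along the witnessing path: in a minimal derivation of $A_\bot$ at the root, every proper ancestor of the first directly clashing node is still non-full at the moment that clash fires, so the mutual dependence between downward propagation and upward $A_\bot$-propagation does not actually occur along the witness. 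Granting this, membership in \PSpace follows: whether the root gets $A_\bot$ is a reachability question in an exponentially large but implicitly represented graph whose vertices have polynomial size and whose edges are computable in polynomial time, which lies in \PSpace by Savitch's theorem; and if the root does not get $A_\bot$, its label is just the rule-closure of $\{A_0\}$, computable in polynomial time. Thus $A_0\subsumedT B_0$ iff $B_0$ lies in that closure or the root gets $A_\bot$.

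For the \emph{lower bound} I would reduce from the \PSpace-complete problem of deciding whether the intersection of finitely many deterministic finite automata $M_1,\dots,M_k$ over an alphabet $\Sigma$ is nonempty. Put $\NR=\Sigma$ and introduce a concept name $\mathsf{st}_{i,q}$ for each automaton $i$ and state $q$. The initial states are placed at the root by axioms $A_0\sqsubseteq\mathsf{st}_{i,q_0^i}$, and each automaton is run in parallel down every path by the value restrictions $\mathsf{st}_{i,q}\sqsubseteq\forall a.\mathsf{st}_{i,\delta_i(q,a)}$, so that the node reached by a word $w$ carries the joint state of all $M_i$ after reading $w$. Collapsing accepting states via $\mathsf{st}_{i,f}\sqsubseteq\mathsf{acc}_i$ and chaining binary conjunctions $\mathsf{acc}_1\sqcap\mathsf{acc}_2\sqsubseteq P_2,\dots,P_{k-1}\sqcap\mathsf{acc}_k\sqsubseteq\mathsf{Acc}$ yields a concept $\mathsf{Acc}$ holding exactly at nodes whose word is accepted by all $M_i$; all these axioms have the permitted \HornFLnullBot shapes.

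It remains to turn acceptance into a subsumption between concept names, which is where value restrictions on the left interact with $\bot$. I would add a marker $\mathsf{M}$ present everywhere ($A_0\sqsubseteq\mathsf{M}$ and $\mathsf{M}\sqsubseteq\forall a.\mathsf{M}$ for all $a$), a sentinel axiom $\mathsf{Acc}\sqsubseteq\forall a.\mathsf{Sent}$ for a fixed $a$, and clash axioms $\mathsf{M}\sqcap\forall a.\mathsf{Sent}\sqsubseteq\bot$ for every $a\in\Sigma$. An accepting node then clashes directly and is forced into $\bot$; being thereby in every concept, in particular in $\mathsf{Sent}$, and since its predecessor carries $\mathsf{M}$, the inconsistency climbs the path one role at a time --- regardless of which roles it uses --- until the root is forced into $\bot$. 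Hence, taking a fresh concept name $B_0$ that is never derived otherwise, $A_0\subsumedT B_0$ holds iff the root is inconsistent iff some node is accepted by all $M_i$ iff $\bigcap_i L(M_i)\neq\emptyset$. The main obstacle here is engineering this upward propagation so that it traverses arbitrary role sequences without spurious clashes; the marker/sentinel gadget, together with the fact that $\mathsf{Sent}$ is underivable except through fullness, is what makes the encoding faithful.
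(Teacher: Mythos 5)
Your proposal is correct in substance, but it takes a genuinely different route from the paper. The paper proves both bounds by a two-way polynomial translation between \HornFLnullBot and \HornFLminus, whose \PSpace-completeness is already known from Kr\"otzsch et al.: the \HornFLminus axiom $A\sqsubseteq\exists r$ is just $A\sqcap\forall r.\bot\sqsubseteq\bot$, which gives hardness, and conversely each clash axiom $\alpha=A\sqcap\forall r.B\sqsubseteq\bot$ is simulated with a fresh role $r_\alpha$ via $A\sqsubseteq\exists r_\alpha$, $A\sqsubseteq\forall r_\alpha.\overline{B}$, $B\sqcap\overline{B}\sqsubseteq\bot$, copying all value restrictions onto $r_\alpha$, which gives membership. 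You instead argue from first principles: membership via the deterministic pure-label automaton of the least functional model plus Savitch, and hardness by encoding DFA intersection non-emptiness. Your version is self-contained and exposes \emph{why} the problem is in \PSpace (reachability in an implicitly represented exponential graph), whereas the paper's is shorter and modular, leaning on an external result. The price of your route is two proof obligations that need more care than your sketch suggests: (i) the pure-label witness lemma requires an induction over the order in which $A_\bot$ is derived, not only the observation about the first directly clashing node, since a node can also become full through a clash in a sibling subtree and one must rule out that this ever supplies a licensing concept name absent from the pure label; and (ii) in the lower bound, ``the inconsistency climbs the path'' is too loose for arbitrary models, where a node may have many $a$-successors --- the clean argument is the contrapositive one: for each letter $a$, every $\mathsf{M}$-node must, on pain of violating $\mathsf{M}\sqcap\forall a.\mathsf{Sent}\sqsubseteq\bot$, have an $a$-successor outside $\mathsf{Sent}$, and following such successors along the jointly accepted word reaches a node in both $\mathsf{Acc}$ and $\mathsf{M}$, which is impossible. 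Both points are fixable, so I consider the proposal correct, just requiring more careful write-up than the paper's reduction.
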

\begin{proof}
 Both directions can be shown by showing a relation to \HornFLminus, for which subsumption between concept names is also \PSpace-complete~\cite{DBLP:conf/aaai/KrotzschRH07}.
 \HornFLminus is similar to \HornFLnullBot, but instead of axioms of 
the form $A\sqcap\forall r.B\sqsubseteq\bot$, it allows {for} axioms of the form 
$A\sqsubseteq\exists r$, where the {semantics} of $\exists r$ is defined by 
$(\exists r)^\Imc=\{d\mid \exists e\in\Delta^\Imc, (d,e)\in r^\Imc\}$. The \HornFLminus axiom $A\sqsubseteq\exists r$ is equivalent to the \HornFLnullBot axiom $A\sqcap\forall r.\bot\sqsubseteq\bot$, which means every \HornFLminus ontology can be easily translated into \HornFLnullBot. 
This establishes \PSpace-hardness of \HornFLnullBot. 

For inclusion in \PSpace, we show how every \HornFLnullBot ontology can be translated in polynomial time into a \HornFLminus ontology. For this, we replace every axiom {$\alpha$} of the 
form $A\sqcap\forall r.B\sqsubseteq\bot$ by the axioms $A\sqsubseteq\exists r_\alpha$, 
$A\sqsubseteq\forall r_\alpha.\overline{B}$ and $B\sqcap\overline{B}\sqsubseteq\bot$, where $r_\alpha$ is fresh for every such axiom {$\alpha$}. In addition, for every such fresh introduced role $r_\alpha$ and every axiom of the form $C\sqsubseteq\forall r.D$, we add $C\sqsubseteq\forall r_\alpha.D$. 
{Intuitively, $A\sqcap\forall r.B\sqsubseteq\bot$ is satisfied iff every instance of $A$ has some $r$-successor that does not satisfy $B$. As there may be several such axioms, we need to distinguish between different $r$-successors for each such axiom. \HornFLminus is not expressive enough to do that directly, which is why we use a different role for every such axiom.
}

Let $\T$ be the TBox before this transformation and $\T'$ the result, and $A$, $B$ be two concept names occurring in $\T$. We show that $\T\models A\sqsubseteq B$ iff 
{$\T'\models A\sqsubseteq B$}.

{($\Rightarrow$)
Assume $\T'\not\models A\sqsubseteq B$, which means there exists some model $\Imc'$ of $\T'$ s.t. $\Imc'\not\models A\sqsubseteq B$. We construct a model $\Imc$ of $\T$ s.t. $\Imc\not\models A\sqsubseteq B$ by setting $\Delta^\Imc=\Delta^{\Imc'}$, $A^\Imc=A^{\Imc'}$ for all $A\in\NC$, and 
}
\comment{If for 
some model {$\Imc'$} of $\T'$, {$\Imc'\not\models A\sqsubseteq B$}, we 
transform {$\Imc'$} into a model {$\Imc$} of $\T$ by setting }
$${r^{\Imc}=r^{\Imc'}\cup\bigcup_{\alpha = (A'\sqcap\forall r.B'\sqsubseteq\bot) \in \T}r_\alpha^{\Imc'}}$$
{for all $r\in\NR$.}
{For} every introduced role name $r_\alpha$ and every axiom {$A'\sqsubseteq\forall r.B'\in\T$}, we have {$\Imc'\models A'\sqsubseteq\forall r_\alpha.B'$}, which yields {$\Imc\models A'\sqsubseteq\forall r.B'$}. Furthermore, for every {$\alpha=A'\sqcap\forall r.B'\sqsubseteq\bot\in\T$} and {$d\in (A')^{\Imc}$}, there exists {$(d,e)\in\ r^{\Imc'}$} s.t. {$(d,e)\in r_\alpha^{\Imc'}$}{ and } {$e\in(\overline{B'})^{\Imc'}$,} {which implies {$e\not\in (B')^{\Imc}$} and $d\not\in(\forall r.B')^{\Imc}$.} 
Thus, we have show that {$\Imc$} is a model of $\T$  and that
{$\Imc\not\models A\sqsubseteq B$}, and thus $\T\not\models A\sqsubseteq B$. 

{($\Leftarrow$)} Now let $\Imc$ be a model of $\T$ s.t. $\Imc\not\models A\sqsubseteq B$. \comment{We 
extend $\Imc$ to a model of $\T'$ by specifying the interpretation of the fresh 
role names. }
{Based on $\Imc$, we construct a model $\Imc'$ of $\T'$ s.t. $\Imc'\not\models A\sqsubseteq B$.}
For every $\alpha=A\sqcap\forall r.B\sqsubseteq\bot\in\T$ and $d\in 
A^\Imc$, there exists some $e\in\Delta^\Imc$ s.t. $(d,e)\in r^\Imc$ and 
$e\not\in B^\Imc$. The interpretation $r_\alpha^{\Imc'}$ of the role $r_\alpha$ is defined 
\comment{by collecting }
{as the set of }
all those pairs 
$(d,e)$. {All other concept and role names are interpreted as in $\Imc$.}
The resulting interpretation $\Imc'$ satisfies all axioms in $\T'$ 
and thus $\T'\not\models A\sqsubseteq B$. 

{Summing up, we have shown that }%
\comment{We obtain that }%
$\T\not\models 
A\sqsubseteq B$ iff $\T'\not\models A\sqsubseteq B$, and thus that subsumption 
between concept names in $\HornFLnullBot$ can be polynomially reduced to 
subsumption between concept names in \HornFLminus. 
\end{proof}

We have used a modification of the algorithm presented in Section~\ref{sec:subs} 
to show that subsumption in \HornFLnull is \PTime-complete, thus indicating optimality of our algorithm
for this fragment. To deal with $\bot$, we could try to employ the reduction presented
in~Section~\ref{ssec:reductions}, which introduces a concept name for $\bot$.
Unfortunately, this approach cannot 
work for \HornFLnullBot. 
In fact, 
if we generalized axioms of the form $A\sqcap\forall r.B\sqsubseteq\bot$ to ones 
that use a concept name instead of $\bot$, we would have to allow axioms of the 
form $A\sqcap\forall r.B\sqsubseteq C$. This makes the logic powerful enough to 
cover the whole language of $\FLnull$, as we can represent 
axioms of the form $A\sqcap\forall r.B_1\sqcap\forall s.B_2\sqsubseteq C$ using 
$A\sqcap\forall r.B_1\sqsubseteq D$ and $D\sqcap\forall r.B_2\sqsubseteq C$, 
and axioms of the form $A\sqcap\forall r.B\sqsubseteq\forall s.C$ using 
$A\sqcap\forall r.B\sqsubseteq D$, $D\sqsubseteq\forall r.C$, where in each 
case, $D$ is fresh. 
In fact, already allowing more than one value restriction on the left-hand increases the complexity. 

If we further relax \HornFLnullBot to allow several value restrictions on the 
left-hand side, the logic becomes again \ExpTime-complete. In 
\HornFLnullBotPlus, axioms are of the 
forms listed in~\eqref{eq:horn-flnullbot-axioms} and the following form:
\begin{align}
 \forall \sigma_1.A_1\sqcap\ldots\sqcap\forall \sigma_n.A_n\sqsubseteq\bot,
\label{eq:horn-fl0bot-plus}
\end{align}
where for $1\leq i\leq n$, $\sigma_i\in\NRstar$ and $A_i\in\NC$.

Hardness of \HornFLnullBotPlus can be shown based on the reduction used in the 
proof for Proposition~1 in~\cite{BaTh-KI-20} employed to show 
\ExpTime-hardness of \FLnull. The reduction uses a TBox that is not in 
\HornFLnullBotPlus and does not even contain $\bot$. However, it uses a special concept name $F$ which essentially mimics the behavior of $\bot$. Replacing $F$ by $\bot$ creates a \HornFLnullBotPlus TBox with a similar behavior.
Specifically, $F$ occurs on the right-hand side of the subsumption test, in 
axioms of the form $A\sqcap B\sqcap\forall w_1.F\sqcap\ldots\sqcap 
w_n.F\sqsubseteq F$ (Axiom 2), $A_1\sqcap\ldots\sqcap A_n\sqsubseteq\forall r.F$ 
(Axiom 7) and in axioms $F\sqsubseteq\forall r.F$, which are added for every 
role name $r$ used in the reduction (Axioms 8 and 9). All other axioms are in 
\HornFLnull. Thus, replacing $F$ by $\bot$ results in a TBox of the desired 
form. We argue that in the resulting TBox, $C\sqsubseteq\bot$ is entailed iff 
$C\sqsubseteq F$ is entailed in the original TBox, where $C$ does not contain 
$F$. If $C\sqsubseteq F$ is entailed by the original TBox, clearly 
$C\sqsubseteq\bot$ is entailed by the transformed. 

For the other direction, 
assume that $C\sqsubseteq F$ is not entailed by the original ontology, and let 
$\Imc$ be a witnessing model with $d\in C^\Imc\setminus F^\Imc$ such that every 
domain element is reachable by a path of role-successors from $d$. We transform 
$\Imc$ into $\Imc'$ by removing all elements in $F^\Imc$. Since $\Imc\models 
F\sqsubseteq\forall r.F$ for all $r\in\NR$, we have for all domain elements 
$e\in\Delta^{\Imc'}$ and words $w\in\NRstar$, $e\in(\forall w.F)^{\Imc}$ iff 
$e\in(\forall w.\bot)^{\Imc}$. It follows that for every axiom of the form 
$A\sqcap B\sqcap\forall w_1.F\sqcap\ldots\sqcap\forall w_n.F\sqsubseteq F$ in 
$\T$, $\Imc'\models A\sqcap\forall w_1.\bot\sqcap\ldots\sqcap\forall 
w_n.\bot\sqsubseteq\bot$, and for every axiom of the form $A_1\sqcap\ldots\sqcap 
A_n\sqsubseteq\forall r.F$, $\Imc'\models A_1\sqcap\ldots\sqcap 
A_n\sqsubseteq\forall r.\bot$. The axioms $\bot\squbseteq\forall r.\bot\in\T$ 
are naturally entailed. None of the remaining axioms have value restrictions on 
the left-hand side, and are thus also entailed by $\Imc'$. Consequently,
$\Imc'$ is a model of the transformed TBox. 

Thus, we have shown that the reduction used 
in~\cite{BaTh-KI-20} to show \ExpTime-hardness of \FLnull can be adapted to 
show \ExpTime-hardness of \HornFLnullBotPlus.

\begin{theorem}
 Deciding subsumption in $\HornFLnullBotPlus$ is \ExpTime-complete.
\end{theorem}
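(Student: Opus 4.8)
The plan is to establish the two matching bounds separately. The \ExpTime lower bound is already delivered by the discussion immediately preceding the theorem: it adapts the reduction of~\cite{BaTh-KI-20} (which shows \ExpTime-hardness of \FLnull) by replacing the distinguished concept name $F$ by $\bot$. That discussion verifies both that the transformed TBox lies in \HornFLnullBotPlus---its new left-hand sides have exactly the shape \eqref{eq:horn-fl0bot-plus}, while the remaining axioms already belong to \HornFLnull---and that, for a concept $C$ not mentioning $F$, the original entailment $C\sqsubseteq F$ holds iff the transformed TBox entails $C\sqsubseteq\bot$. Hence hardness is in hand, and the real content of the proof is the \ExpTime membership.

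For the upper bound I would argue by inclusion into a problem already known to be in \ExpTime. The crucial observation is that \HornFLnullBotPlus is merely a \emph{syntactic} restriction of \FLbot: every axiom of the forms in~\eqref{eq:horn-flnullbot-axioms} together with the additional form~\eqref{eq:horn-fl0bot-plus} is an ordinary \FLbot GCI, since both sides are built solely from concept names, conjunction, value restrictions, $\top$, and $\bot$; in particular each conjunct $\forall\sigma_i.A_i$ of~\eqref{eq:horn-fl0bot-plus} abbreviates a legal \FLbot value restriction. Consequently, deciding subsumption w.r.t.\ a \HornFLnullBotPlus TBox is a special case of deciding subsumption w.r.t.\ a general \FLbot TBox. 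I would then normalize via Proposition~\ref{prop:normal-form} and apply the polynomial-time reduction from \FLbot to \FLnull of Section~\ref{ssec:reductions} (the rules \tRule{1}--\tRule{3}, justified by Lemma~\ref{lem:fl0bot}), obtaining an equivalent \FLnull subsumption instance. Since subsumption in \FLnull w.r.t.\ general TBoxes is in \ExpTime~\cite{BaBL05,Hofm05} and the reduction is polynomial, the composed procedure runs in exponential time, giving membership; together with hardness this yields \ExpTime-completeness.

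I do not expect any real obstacle in the membership direction---it is a clean chain of a syntactic inclusion, a polynomial reduction, and a known \ExpTime decision procedure, and it does not even exploit the Horn shape of the axioms. The only genuinely delicate reasoning is on the hardness side, where one must confirm that substituting $\bot$ for $F$ keeps the TBox inside \HornFLnullBotPlus and, above all, that deleting the $F$-elements (now $\bot$-elements) from a witnessing model of the original TBox preserves precisely those axioms whose left-hand sides carry value restrictions; this uses $F\sqsubseteq\forall r.F$ to guarantee $e\in(\forall w.F)^{\Imc}$ iff $e\in(\forall w.\bot)^{\Imc}$ for surviving elements, and it is exactly this equivalence that makes the two subsumption questions coincide. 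That step is carried out in the text above the theorem.
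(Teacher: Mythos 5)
Your proposal matches the paper's treatment: the hardness direction is exactly the adaptation of the reduction from \cite{BaTh-KI-20} given in the discussion preceding the theorem, and the membership direction---which the paper leaves implicit---is the intended one, namely that \HornFLnullBotPlus is syntactically contained in \FLbot, whose subsumption problem reduces in polynomial time to \FLnull and is therefore decidable in \ExpTime. There are no gaps; spelling out the upper-bound chain explicitly is a useful addition rather than a deviation from the paper's argument.
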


\section{A Rete-based Implementation}\label{sec:rete}

Our implementation of $\SUBSpar{A_0}{B_0}{\T}$ in \flower employs a variant of the algorithm for Rete networks~\cite{DBLP:journals/ai/Forgy82} to allow for a fast generation of completions of the partial model to be constructed. Specifically, the Rete network tests on
all domain elements 
satisfaction of all GCIs at the same time. It stores also partial matches so that they can be quickly continued once additional information is available. In addition, \flower  uses optimized data structures to allow for a fast and memory-efficient navigation in the current model, as well as to speed-up the implementation of blocking.

\subsection{Rete network for the TBox to speed-up  matching of GCIs}
\label{sec:Rete}

In order to compute a sequence of \T-completions $\Y_0 \completionstepT \Y_1 \completionstepT \Y_2 \completionstepT \cdots$ 
starting from the initial partial functional interpretation $\Y_0$, we can employ a GCI $C \subsumed D \in \T$ like a rule of the form 
\begin{align*}
?\sigma \in \match{C}{\Y_i} \rightarrow ~ ?\sigma \in \match{D}{\Y_i},	
\end{align*}
where $?\sigma$ ranges over the non-blocked domain elements of $\Y_i$, to obtain the next expansion. Overall, the rules corresponding to the GCIs from the TBox are applied during a run of $\SUBSpar{A_0}{B_0}{\T}$ in a forward-chaining manner to yield the  sequence of \T-completions.

In each expansion step $i$, one has to compute  the elements that violate a GCI, i.e., the pairs $(\sigma,C \subsumed D) \in \left(\domainof{\Y_i} \cap \notblocked{\Y_i} \right) \times \T$ such that 
$\sigma$ matches $C$ but not $D$ in $\Y_i$. 
Since there is potentially a large number of elements in 
$\domainof{\Y_i}$ that has to be matched against a large number of left-hand sides of GCIs (patterns) in the TBox in each step, we have chosen to implement this 
task using the Rete algorithm for many pattern/many object matching 
\cite{DBLP:journals/ai/Forgy82}, which is tailored to efficiently compute forward chaining rule applications. The general idea is to integrate the matching tests of all GCIs using a Rete network, which is in our case a compressed network-representation of the TBox. 
In each completion step, the extension of a tree $\Y_i$ only affects a small number of its elements: the matching element $\sigma$ itself and/or its children. This makes the Rete-based algorithm particularly efficient in our setting, because it stores matching information across completion steps to avoid reiterating over the whole set of pairs $\left(\domainof{\Y_i} \cap \notblocked{\Y_i} \right) \times \T$ in each step. Only those elements with changes have to be re-matched again in the next completion step.

For a given element, the network tests which left-hand sides of a GCI are matched and triggers the extension for the corresponding right-hand side. 
This Rete network corresponds to a graph using three kinds of nodes: a single root node, a set of intermediate nodes and a set of terminal nodes. Intuitively, the \emph{intermediate nodes} check for matches of parts of the left-hand side of a GCI, while the \emph{terminal nodes} hold the right-hand side of a GCI that is ready to be applied to an element. 
To process an element $\sigma\in\domainof{\Y}$, a set of so-called tokens is passed from the root node through the intermediate nodes to the terminal nodes. 
Such a \emph{token} is a pair of the form 
$(\sigma, r) \in \big(\NR^*, \NR \cup \{ \epsilon \}\big)$.
Intuitively, the token $(\sigma, \epsilon)$ is used 
to check whether $\sigma$ matches the concept names on the left-hand side of a GCI, 
while a token of the form $(\sigma, r)$ with $r \in \NRCT$ 
is used to check whether $\sigma$ matches value restrictions with the role name $r$. 

There are the following three types of \emph{intermediate nodes} that process tokens arriving from predecessor nodes in the network:
\begin{itemize}
	\item A \emph{concept node} is labeled with a concept name $B \in \NC$ and sends an incoming token $(\sigma, s)$ to all successor nodes
	iff $\sigma s \in B^{\Y_i}$.
	\item A \emph{role node} is labeled with an 
	$s \in \NR \cup \{ \epsilon \}$. 
	An arriving token of the form $(\sigma, s')$ is handled as follows. If $s \in \NRCT$ and $s'=s$, then it sends $(\sigma,s')$ to all successor nodes. If $s = \epsilon$, then it sends the token $(\sigma s', \epsilon)$ to all successor nodes.
	\item An \emph{inter-element node} is labeled with a tuple $(s_1,\ldots,s_m) \in (\NRCT \cup \{ \epsilon \})^m$. 
	It stores all arriving tokens and sends a token $(\sigma, \epsilon)$ to its successor nodes once \emph{all} tokens of the form $(\sigma, s_1),\ldots,(\sigma,s_m)$ have arrived at this node.
\end{itemize}
The overall network is structured in layers. The root node with no incoming edges is on top. 
All successors of the root node are concept nodes. The root node takes an element of the form $\sigma = \rho r \in \NRCT^*$ and sends the token $(\rho, r)$ to all successor nodes. A successor of a concept node can only be another concept node or a role node. A role node leads directly to an inter-element node and inter-element nodes lead to terminal nodes. Intuitively, paths of concept nodes corresponds to conjunctions of concept names a token must satisfy in order to pass through them. These concept names either need to be matched on the current element or on its immediate role successors. If the path of concept names goes into a role node labeled with $\epsilon$, this corresponds to a match on the current element. If it goes into a role node labeled with a role name $r$, this corresponds to a match on its $r$-successors. The inter-element nodes again correspond to a conjunction that combine the successful matches of the different role-successors.

\begin{example}\label{ex:rete}
	As an example of the structure of a Rete network compiled from a TBox, consider the following normalized TBox:
	\begin{align*}
		\T_{ex} = \{~ A_2 \sqcap A_4 \sqcap A_5 \sqcap \forall r_1. A_3 \sqcap \forall r_1. A_4 \sqcap \forall r_2. A_1 & \subsumed B_7, \\
		\forall r_2. A_3 \sqcap \forall r_2. A_4 & \subsumed B_8, \\
		\forall r_1. A_6 &\subsumed \forall r_1. B_9~ \}.
	\end{align*}
	The corresponding Rete network is displayed in Figure \ref{fig:rete} with the root node (Layer 1) and the three leaves being terminal nodes representing the left-hand sides of the three GCIs (Layer 5). The intermediate nodes are concept nodes representing (conjunctions of) named concepts (Layer 2), role nodes (Layer 3) or the inter-element node representing the conjunction of value restrictions for different roles from the first GCI in $\T_{ex}$ (Layer 4). 
\end{example}
\begin{figure} 
	\begin{center}
\fbox{\parbox[t]{0.8\textwidth}{
\vspace{\smallskipamount} \hfill
\begin{tikzpicture}
	\node[rootnode] (root) at (0,0) {root};
	\node[conceptnode] (a1) at (-3,-1.25) {$A_1$};
	\node[conceptnode] (a2) at (-1,-1.25) {$A_2$};
	\node[conceptnode] (a3) at (1,-1.25) {$A_3$};
	\node[conceptnode] (a6) at (3,-1.25) {$A_6$};
	
	\node[conceptnode] (a41) at (-1,-2.5) {$A_4$};
	\node[conceptnode] (a42) at (1,-2.5) {$A_4$};
	
	\node[conceptnode] (a5) at (-1,-3.75) {$A_5$};
	
	\node[rolenode] (r21) at (-3,-5) {$r_2$};
	\node[rolenode] (eps) at (-1,-5) {$\epsilon$};
	\node[rolenode] (r11) at (1,-5) {$r_1$};
	\node[rolenode] (r22) at (2,-5) {$r_2$};
	\node[rolenode] (r12) at (3,-5) {$r_1$};
	
	\node[internode] (inter) at (-1,-6) {$(\epsilon,r_1,r_2)$};
	
	\node[terminalnode] (b7) at (-1,-7) {$B_7$};
	\node[terminalnode] (b8) at (2,-7) {$B_8$};
	\node[terminalnode] (b9) at (3,-7) {$\forall r_1. B_9$};
	
	\draw[->] (root) -- (a1);
	\draw[->] (root) -- (a2);
	\draw[->] (root) -- (a3);
	\draw[->] (root) -- (a6);
	
	\draw[->] (a1) -- (r21);
	\draw[->] (a2) -- (a41);
	\draw[->] (a41) -- (a5);
	\draw[->] (a5) -- (eps);
	
	\draw[->] (a3) -- (a42);
	\draw[->] (a42) -- (r11);
	\draw[->] (a42) -- (r22);

	\draw[->] (a6) -- (r12);
	
	\draw[->] (r21) -- (inter);
	\draw[->] (eps) -- (inter);
	\draw[->] (r11) -- (inter);
	\draw[->] (inter) -- (b7);
	
	\draw[->] (r22) -- (b8);
	\draw[->] (r12) -- (b9);
\end{tikzpicture}
 \hfill \vspace{\medskipamount}}} 
\end{center}
	\caption{Rete network for the TBox $\T_{ex}$ from Example \ref{ex:rete}. Intermediate nodes are drawn in round shapes: concept nodes in light circles, role nodes in dark circles, and the inter-element node in an ellipsis. Terminal nodes are in displayed as boxes. }
	\label{fig:rete}
\end{figure}
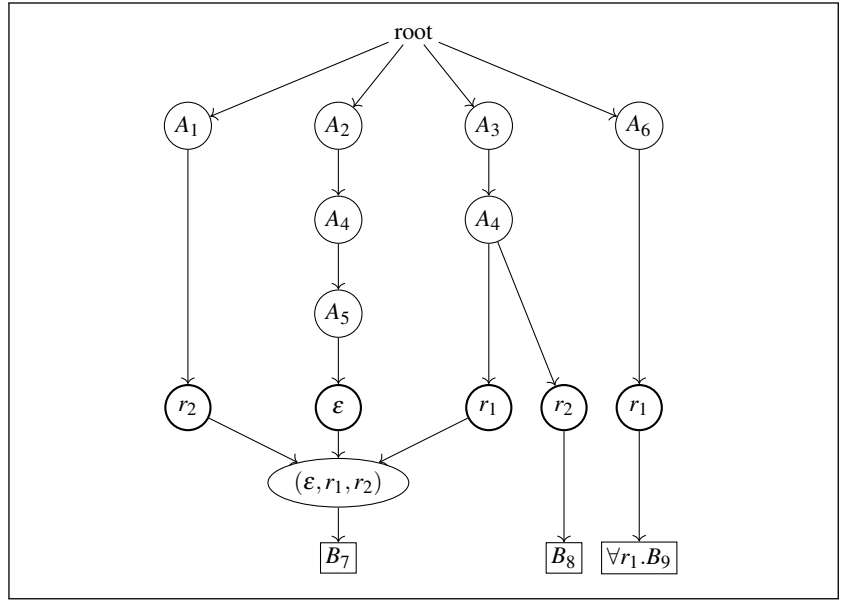

In the preprocessing phase, \flower compiles the normalized TBox \T into the corresponding \emph{Rete network}. In the main reasoning phase, \flower
saturates the initial partial functional interpretation $\Y_0$ by the Rete algorithm. 
To unleash the full potential of this Rete-based approach, we need to store the current model in a way that allows for fast access of its successor nodes, which is discussed in the next subsection.

\subsection{Numerical Representation of Partial Functional Interpretations}

\newcommand{\indx}[1]{\textsf{index}(#1)}

The operations \flower needs to perform repeatedly on the current partial functional interpretation $\Y_i$ for a given domain element are the following: 
\begin{enumerate}
	\item  quickly access its direct successors when a GCI is applied, 
	\item  quickly decide whether a smaller domain element with the same label set exists to test Condition \bRule{2} for direct blocking, and 
	\item  quickly decide whether a domain element is an ancestor of another element to test Condition \bRule{3} for indirect blocking.
\end{enumerate}

To obtain a space-efficient representation of the partial functional interpretation
that supports these operations with minimal overhead, we use an integer-based 
representation with the basis $\lvert\NR\rvert+1$. Specifically, we fix 
an enumeration of the role names in $\T$: $\NR=\{r_1,\ldots,r_n\}$. A 
word $\sigma=r_{i_1}r_{i_2}\ldots r_{i_m}\in\NRstar$ is then represented as 
$$\indx{\sigma}=\sum_{1\leq j\leq m} i_j(n+1)^{m-j}.$$ 
This representation reduces various operations on 
words that are relevant for the algorithm to fast arithmetic operations in the following ways:
\begin{enumerate}
	 \item the length of $\sigma$ is 
	$\left\lvert\sigma\rvert=\lfloor\log_{n+1}(\indx{\sigma})\right\rfloor$, 
	\item 
	the $r_i$-successor of $\sigma$ has index: $(n+1)\cdot\indx{\sigma}$,
	 \item the direct predecessor of $\sigma$ has index 
	$\left\lfloor\frac{\indx{\sigma}}{n+1}\right\rfloor$, and
	 \item checking whether $\rho$ is an ancestor of $\sigma$, i.e.\ whether
	$\sigma\in\pprefixset{\rho}$, can be done by checking whether 
		$$\indx{\sigma}=\left\lfloor\frac{\indx{\rho}}{(n+1)^{(\lvert\rho\rvert-\lvert\sigma\rvert)}}\right\rfloor.$$
\end{enumerate}
Note that this numerical encoding also directly provides an \emph{ordering on elements} $\prec$ as required: specifically, we define this ordering by 
$\sigma\prec\rho$ iff $\indx{\sigma}<\indx{\rho}$.

The labels of each domain element are stored in a tree map, which is a data structure that associates each index 
with a non-empty label to its label set. The inverse of this map is also 
stored, to quickly obtain which domain elements have a given label set. This operation is required to test the blocking Condition \bRule{2}.

\subsection{Implementation of  Blocking}\label{ssec:implementation-blocking}

After each expansion step it needs to be tested whether the blocking conditions \bRule{1} to \bRule{3} are fulfilled for the elements of the partial functional interpretation. Unfortunately, there can be intricate interactions between the blocking statuses of different elements.
Although GCIs are only applied on elements that are not blocked, the labels of a blocked element can change if a GCI is applied on some of its predecessors. As elements that are themselves blocked cannot block other elements, such a change in the label of a blocked element can lead to chain-reactions where the blocking status of a number of elements changes once information is propagated into a single element. 
An example of this effect is visualized in Figure~\ref{fig:Block}. On the left-hand side, $\sigma_3$ 
blocks $\sigma_4$, which makes the nodes $\sigma_5$ and $\sigma_6$ indirectly 
blocked. 
\begin{figure}
	\begin{centering}
\begin{tikzpicture}
 \node[conceptnode] (s1) at (0,0) {$\sigma_1$};
 \node[conceptnode] (s2) at (-0.5,-1) {$\sigma_2$};
 \node[conceptnode,very thick] (s3) at (0.5,-1) {$\sigma_3$};
 \draw[->] (s1) -- (s2);
 \draw[->] (s1) -- (s3);
 
 \node (dots1) at (2,0) {$\vdots$};
 \node[conceptnode,fill=black!30] (s4) at (2,-1) {$\sigma_4$};
 \node[conceptnode,fill=black!20] (s5) at (1.5, -2) {$\sigma_5$};
 \node[conceptnode,fill=black!20] (s6) at (2.5,-2) {$\sigma_6$};
 \draw[->] (s4) -- (s5);
 \draw[->] (s4) -- (s6);
 
 \path[->,dotted,very thick] (s3) edge [bend left] (s4);
 
 \node (dots2) at (4,0) {$\vdots$};
 \node[conceptnode] (s7) at (4,-1) {$\sigma_7$};
 \node[conceptnode] (s8) at (3.5, -2) {$\sigma_8$};
 \node[conceptnode] (s9) at (4.5,-2) {$\sigma_9$};
 \draw[->] (s7) -- (s8);
 \draw[->] (s7) -- (s9);
 
 \draw (-1,0.5) rectangle (5,-2.5);
 
 \draw[-{Triangle[width=18pt,length=8pt]}, line width=10pt](5.2,-1) -- (5.9, -1);

 \node[conceptnode,fill=black!30] (s1b) at (7,0) {$\sigma_1$};
 \node[conceptnode,fill=black!20] (s2b) at (6.5,-1) {$\sigma_2$};
 \node[conceptnode,fill=black!20] (s3b) at (7.5,-1) {$\sigma_3$};
 \draw[->] (s1b) -- (s2b);
 \draw[->] (s1b) -- (s3b);
 
 \node (dots1) at (9,0) {$\vdots$};
 \node[conceptnode] (s4b) at (9,-1) {$\sigma_4$};
 \node[conceptnode] (s5b) at (8.5, -2) {$\sigma_5$};
 \node[conceptnode,very thick] (s6b) at (9.5,-2) {$\sigma_6$};
 \draw[->] (s4b) -- (s5b);
 \draw[->] (s4b) -- (s6b);
 
 \node (dots2) at (11,0) {$\vdots$};
 \node[conceptnode] (s7b) at (11,-1) {$\sigma_7$};
 \node[conceptnode] (s8b) at (10.5, -2) {$\sigma_8$};
 \node[conceptnode,fill=black!30] (s9b) at (11.5,-2) {$\sigma_9$};
 \draw[->] (s7b) -- (s8b);
 \draw[->] (s7b) -- (s9b);
 
 \path[->,dotted,very  thick] (s6b) edge [bend right] (s9b);
 
 \draw (6,0.5) rectangle (12,-2.5);
\end{tikzpicture}

%
%
\end{centering}
	\caption{Example of blocking interactions in a partial functional interpretation. Gray elements are blocked, and the dotted arrow indicates an element directly blocking another.}
	\label{fig:Block}
\end{figure}
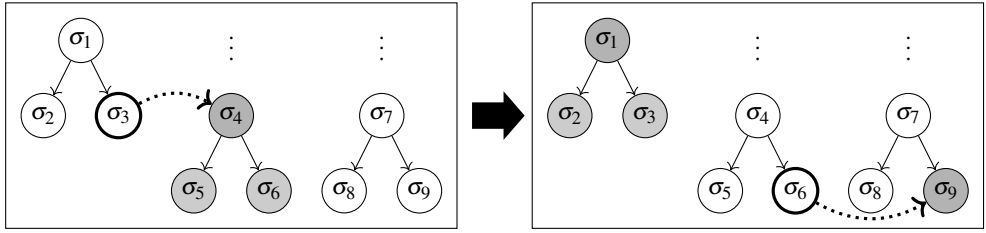
Thus, these nodes cannot block other nodes themselves. In our 
example, we assume $\sigma_6$ and $\sigma_9$ to have the same labels. 
$\sigma_9$ is not blocked by $\sigma_6$, since $\sigma_6$ is blocked. Blocking 
$\sigma_9$ 
would thus make the overall reasoning procedure incomplete. Now imagine some extension 
makes the node $\sigma_1$ blocked. The resulting situation is shown on the 
right-hand side. Since $\sigma_3$ becomes indirectly blocked, it cannot block 
$\sigma_4$ anymore. Consequently, also the 
	descendants
of $\sigma_4$ become 
unblocked, and now $\sigma_9$ becomes blocked by $\sigma_6$, even though there 
is no connection between these nodes and $\sigma_1$.

To determine directly blocked nodes, \flower uses two hash maps. One is mapping each node to 
its label set, and the other one is mapping each label set to a node. In addition, \flower stores 
for each node whether it is blocking another node, directly blocked, or 
indirectly blocked. If the label set of a node changes, \flower determines via the 
hash maps whether this change results in directly blocking or unblocking any nodes, and updates their 
blocking status accordingly. For every node whose blocking status changes, the indirect blocking status of their successors is recursively updated. If the 
indirect blocking status changes (as seen for $\sigma_6$ in the last example), \flower checks via the hash maps whether the 
blocking status of other nodes has to change as well, and invokes those changes. 
This process is continued recursively until all affected blocking statuses have 
been updated. 

\section{Evaluation of the \flower reasoner}\label{sec:eval}

The \flower reasoner is implemented in Java. It takes as input a general \FLbot TBox \T in OWL format \cite{DBLP:journals/ws/GrauHMPPS08} and normalizes the input TBox. If the ontology uses $\top$ or $\bot$, the transformation rules from Section~\ref{ssec:reductions} are applied. \flower realizes the following reasoning tasks. 
\begin{description}
	\item[\emph{Subsumption:}] Given two OWL classes $A$ and $B$, decide whether $A \subsumedT B$ holds.
	\item[\emph{Subsumer set:}] Given an OWL class $A$, compute all classes $B$ in $\T$ for which $A \subsumedT B$ holds.
	\item[\emph{Classification:}] Decide for all pairs of named OWL classes $A$ and $B$ occurring in $\T$ whether the subsumption $A \subsumedT B$ holds.
\end{description}
To decide subsumption \flower runs $\SUBSpar{A_0}{B_0}{\T}$, but stops as soon as the {subsumer candidate} $B_0$ occurs at the root of the tree. 
For computing the whole subsumer set of $A_0$, a single complete run of $\SUBSpar{A_0}{B_0}{\T}$ is sufficient, where the choice of $B_0$ is actually irrelevant. All subsumers of $A_0$ can be found at the root of the final tree.
Classification is done by running $\SUBSpar{A_0}{\tt{*}}{\T}$ for each named class $A_0$ in \T separately. (Again, the used {subsumer candidate} $B_0$ is irrelevant {for this kind of reasoning task and can be replaced by any concept other than $A_0$ or $\top$. This is indicated here by the wildcard $*$.}) The Rete network for \T is created only once and is reused for the remaining runs of $\SUBSpar{\tt{*}}{\tt{*}}{\T}$ {during classification}. Furthermore, {\flower} uses caching to reuse precomputed subsumer sets.

Our evaluation of \flower investigates two aspects. First, we wanted to see which optimizations implemented in \flower turned out to be effective and, second, we wanted to see how \flower's performance compares to other state-of-the-art DL reasoner. As most other DL reasoners that can handle (extensions of) \FLnull implement tableau-based methods, such a comparison would also tell {us} whether our new approach based on least functional models is competitive in terms of performance. 
We report on both kinds of evaluations in this section.
In order to be able to assess the performance of \flower, we needed to find suitable test ontologies first.

\subsection{Test Data}

\newcommand{\corpusA}{\textsc{Ore-Corpus}\xspace}
\newcommand{\corpusB}{\textsc{Mowl-Corpus}\xspace}
We generated two corpora for our evaluation. 
The first corpus \corpusA is based on the ontologies of the 
OWL EL classification track from the OWL Reasoner Evaluation 2015 (ORE 2015)~(see \cite{parsia2017owl}). The benchmarks of the ORE 2015 have the advantage that they have been balanced according to different criteria such as size, expressivity and complexity, and consist of many application ontologies. However, unfortunately no track is dedicated to ontologies in \FLbot. We thus generated \FLbot ontologies from ontologies written in \EL by \enquote{flipping} the quantifier, that is, by replacing $\exists$ by $\forall$. We furthermore dropped axioms involving role inclusions, nominals, or other operators that cannot be expressed in \FLbot. From the resulting corpus, we removed all ontologies with less than 500 concept names, resulting in a set of 209 \FLbot ontologies.\footnote{In the initial study on \flower's performance \cite{MTZ-RuleML-19} was an undetected bug which lead to more ontologies being discarded. There we used only 159 ontologies.}

While the ontologies used by the ORE do not contain a lot of \FLbot axioms, we found larger usage of them in the Manchester Ontology Corpus (MOWLCorp), which is a large ontology corpus containing 34,741 OWL
ontologies that were obtained by web-crawling~\cite{MOWLCORP}. The second corpus, \corpusB, is based on MOWLCorp. From each ontology in MOWLCorp, we removed axioms that could not be expressed in \FLbot. 
If the resulting ontology contained at least 500 concept names, it was included in our corpus. 
This resulted in a set of 382 \FLbot ontologies.
While the \corpusA contains more complex axioms and a more balanced set of ontologies, the \corpusB contains axioms that were obtained from application ontologies without modifications and thus preserves the original way of modeling. 

Figure~\ref{fig:corpora} shows the distribution of different parameters in the two corpora: number of concept names and number of axioms.
The largest ontology in \corpusA has 3,137,899 axioms, while the largest ontology in \corpusB has 279,682 axioms. 

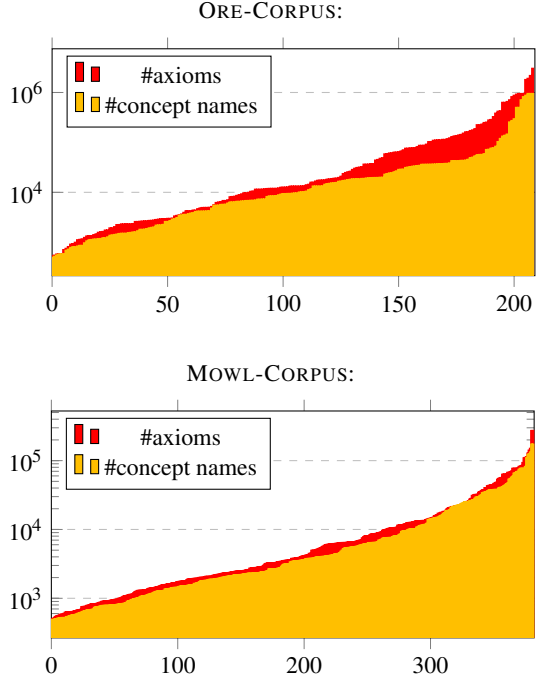
\begin{figure}
 \centering

\corpusA:

\medskip

\begin{tikzpicture}

  \begin{axis}[height=4.6cm, 
               width=8cm,
               ylabel={},
               xlabel={},
               ymode=log,
               ybar,
               bar width=0.04cm,
               ymajorgrids=true, grid style=dashed,
               xmin=0,
               xmax=209,
               legend pos=north west
              ]
    \addplot[draw=none, fill=red, bar shift=0] table 
    [y index = 0, x expr =\coordindex]
         {data/ore-axioms.txt}; 
    \addplot[draw=none, fill=orange!50!yellow, bar shift=0] table 
    [y index = 0, x expr=\coordindex]
        {data/ore-concepts.txt}; 
    \addlegendentry{\#axioms};
    \addlegendentry{\#concept names};
  \end{axis}
\end{tikzpicture}

\bigskip

\corpusB:

\medskip

\begin{tikzpicture}

  \begin{axis}[height=4.6cm, 
               width=8cm,
               ylabel={},
               xlabel={},
               ymode=log,
               ybar,
               bar width=0.04cm,
               ymajorgrids=true, grid style=dashed,
               xmin=0,
               xmax=382,
               legend pos=north west
              ]
    \addplot[draw=none, fill=red, bar shift=0] table 
    [y index = 0, x expr =\coordindex]
         {data/mowlcorp-axioms.txt}; 
    \addplot[draw=none, fill=orange!50!yellow, bar shift=0] table 
    [y index = 0, x expr=\coordindex]
        {data/mowlcorp-concepts.txt}; 
    \addlegendentry{\#axioms};
    \addlegendentry{\#concept names};
  \end{axis}
\end{tikzpicture}
 \caption{Numbers of axioms and concept names in the ontologies under consideration. 
 {The $y$-axis shows in logarithmic scale the number of axioms and concept names of the respective ontology, 
 for which we ordered the  values along the x-axis.}
 }
 \label{fig:corpora}
\end{figure}

In order to evaluate the subsumption task, we  generated 80 individual subsumption tests per ontology for the \corpusA, composed of 40 tests with \emph{positive} and 40 tests with \emph{negative} outcome. 
Since this resulted in a large number of reasoning experiments to be performed, this experiment was only performed on \corpusA, for which we assumed the most insights due to its more varied nature compared to \corpusB. 

Positive tests were generated by randomly selecting a concept name, and then randomly selecting a subsumer of it. Negative tests were generated by randomly selecting a concept name, and then randomly selecting another concept name that does not subsume the first.
For \flower, positive subsumption tests are easier, as the reasoner stops as soon as the subsumption relation has been proven. For tableau-based reasoning systems, the expected behavior is the other way around, as these reasoners try to create a counter-example to contradict the subsumption to be tested. Thus, evaluation results might not be as informative if one would just generate pairs for the subsumption test randomly without distinguishing between positive and negative tests as it was done in the earlier study  \cite{MTZ-RuleML-19}.

\subsection{Evaluation Setup}

In the initial study on \flower, we compared the performance for all three implemented reasoning tasks \emph{Subsumption}, \emph{Subsumer Set} and \emph{Classification} ~\cite{MTZ-RuleML-19}. 
Although the OWL API supports computing subsumer sets and is implemented by all considered reasoner systems, it would not yield an informative test, since all three tableaux-based systems classify the entire ontology before returning the subsumer set, as inspection of their source code revealed. This makes a comparison simply unfair and less insightful, which is why we restricted this study to the tasks \emph{Subsumption} and \emph{Classification}. 

For subsumption tests, we used the \corpusA and the concept pairs for positive and negative subsumptions. For each subsumption test, the timeout was set to 1 minute.
Both \corpusA and \corpusB were used to evaluate the  classification task. For each classification reasoning task, the timeout was set to 10 minutes. 

In addition to the running times measured for the two reasoning tasks, we also compared the computed results. While this comparison is easy for the \emph{Subsumption} tests, for \emph{Classification}, we computed a checksum for the classification result, and checked whether it was the same for every reasoner.

As a test system we have used an Intel Core i5-4590 CPU machine with 3.30GHz and 32 GB RAM, using Debian/GNU Linux 9 and OpenJDK 11.0.5.
Java was called with \textit{-Xmx8g} to set the maximum allocation pool (heap) size to 8 GB. 
We only measured  the running time of the actual reasoning task and not the time for loading the ontology.

\subsection{Evaluating \flower's Optimizations}

Although the current version of  \flower is certainly not highly optimized, it implements several optimizations.
We first evaluated the effect of the different optimizations within \flower:
\begin{description}
 \item[Multithreading] The main algorithm computes all subsumers for a given concept name. For the task of \emph{classification}, we partition the concept names into batches of size 48 plus one partition for the rest, and compute the subsumers for each partition in a different thread.
 \item[Ancestor blocking] Ancestor blocking corresponds to the blocking condition~\bRule{3}. While the method might not terminate without this condition, it is the interaction of this blocking condition with~\bRule{2} that makes the implementation of blocking more challenging (see Section~\ref{ssec:implementation-blocking}). To {assess} the impact of this blocking condition, we allowed to deactivate ancestor blocking in the implementation.
 \item[Role filtering] We do not generate $r$-successors for roles $r\in\NR$ that do not occur on the left-hand side of a GCI. 
 {As pointed out in Remark~\ref{rem:horn}, this optimization preserves soundness and completeness. Moreover, as shown in the proof for Theorem~\ref{the:horn-fl0}, reasoning in \HornFLnull becomes polynomial with this optimization, so that one would {expect} a big impact of this optimization.}
 \item[Global caching] When performing classification, we store previously computed subsumer sets. If a node with a concept name is added for which we already have a subsumer set, we add all the subsumers to that node and block it. The node only becomes unblocked when new concept names are added to its label by subsequent reasoning steps.
\end{description}
\newcommand{\flowerMT}{\flower-MT\xspace}
\newcommand{\flowerMTAB}{\flower-MT-AB\xspace}
\newcommand{\flowerMTABGC}{\flower-MT-AB-GC\xspace}
\newcommand{\flowerALL}{\flower-ALL\xspace}
We compared the following configurations of our reasoner:
\begin{itemize}
	\item \flower with no optimizations,
	\item \flowerMT (multithreading activated), 
	\item \flowerMTAB (multithreading and ancestor blocking), 
	\item \flowerMTABGC (multithreading, ancestor blocking and global caching), and 
	\item \flowerALL with all four optimizations activated.
\end{itemize}
Figure~\ref{fig:subsumption-flower} shows the results for the subsumption experiment, while Figure~\ref{fig:classification-flower} shows the results for the classification experiments. Here and in the figures that follow, we use logarithmic scaling on both axes, and we show for the runs that caused a timeout the maximal value (1 minute for subsumption, and 10 minutes for classification).

\begin{figure}
\begin{tabular}{c|r|r|r|r|r}
 & \flower & \flowerMT & \flowerMTAB & \flowerMTABGC & \flowerALL \\
Timeouts:  & 0.40\% & 0.39\% & 0.00\% & 0.00\% & 0.00\%
\end{tabular}

\begin{center}
 \includegraphics[width=.75\linewidth]{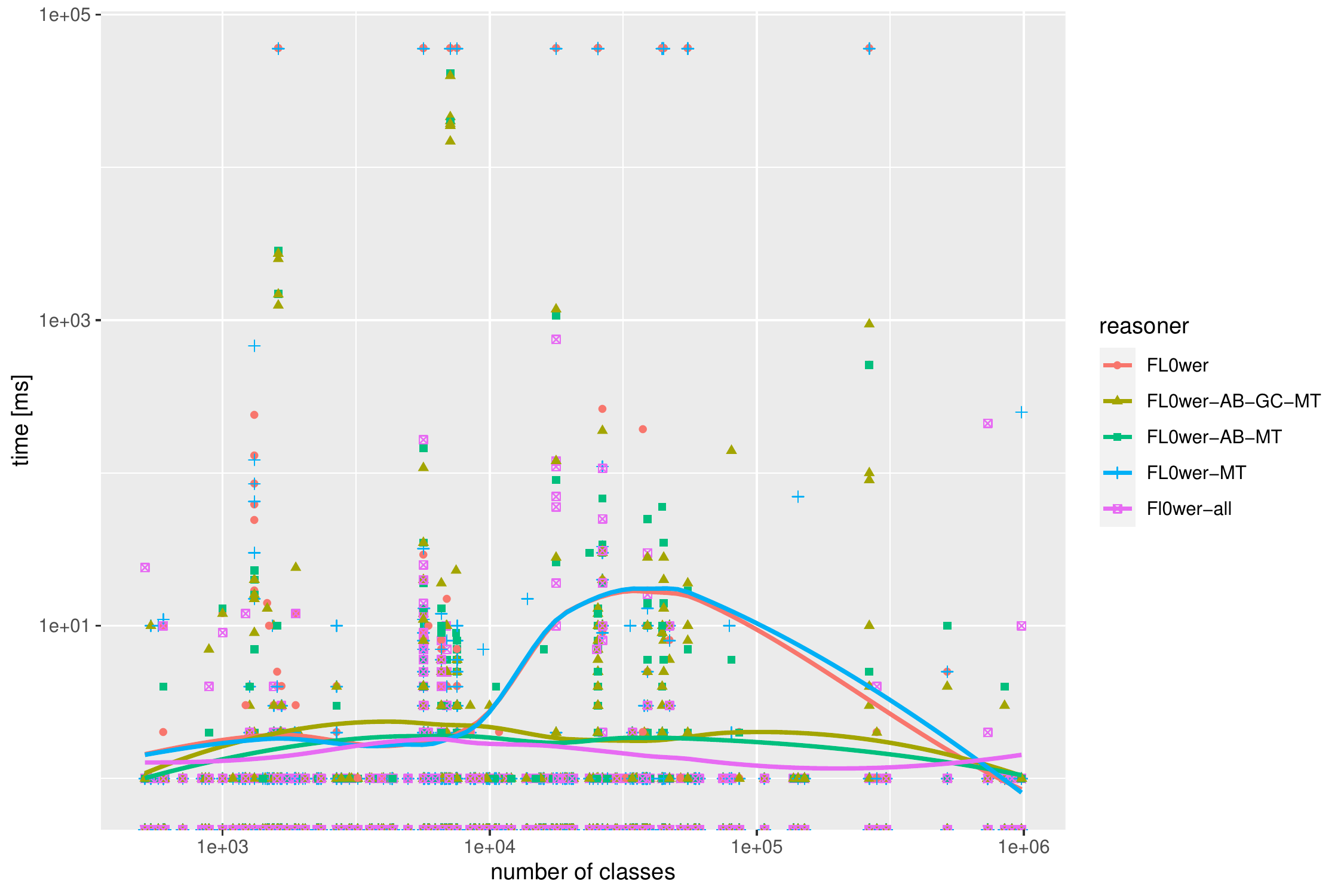}
\end{center}

\caption{Timeouts and running times for subsumption tests w.r.t.\ \corpusA with the different optimizations.}
\label{fig:subsumption-flower}
\end{figure}    

\begin{figure}

\corpusA:

\medskip

\begin{tabular}{c|r|r|r|r|r}
 & \flower & \flowerMT & \flowerMTAB & \flowerMTABGC & \flowerALL \\
Timeouts:  & 18.28\% & 17.91\% & 4.10\% & 2.99\% & 2.61\%
\end{tabular}

\begin{center}
 \includegraphics[width=.75\linewidth]{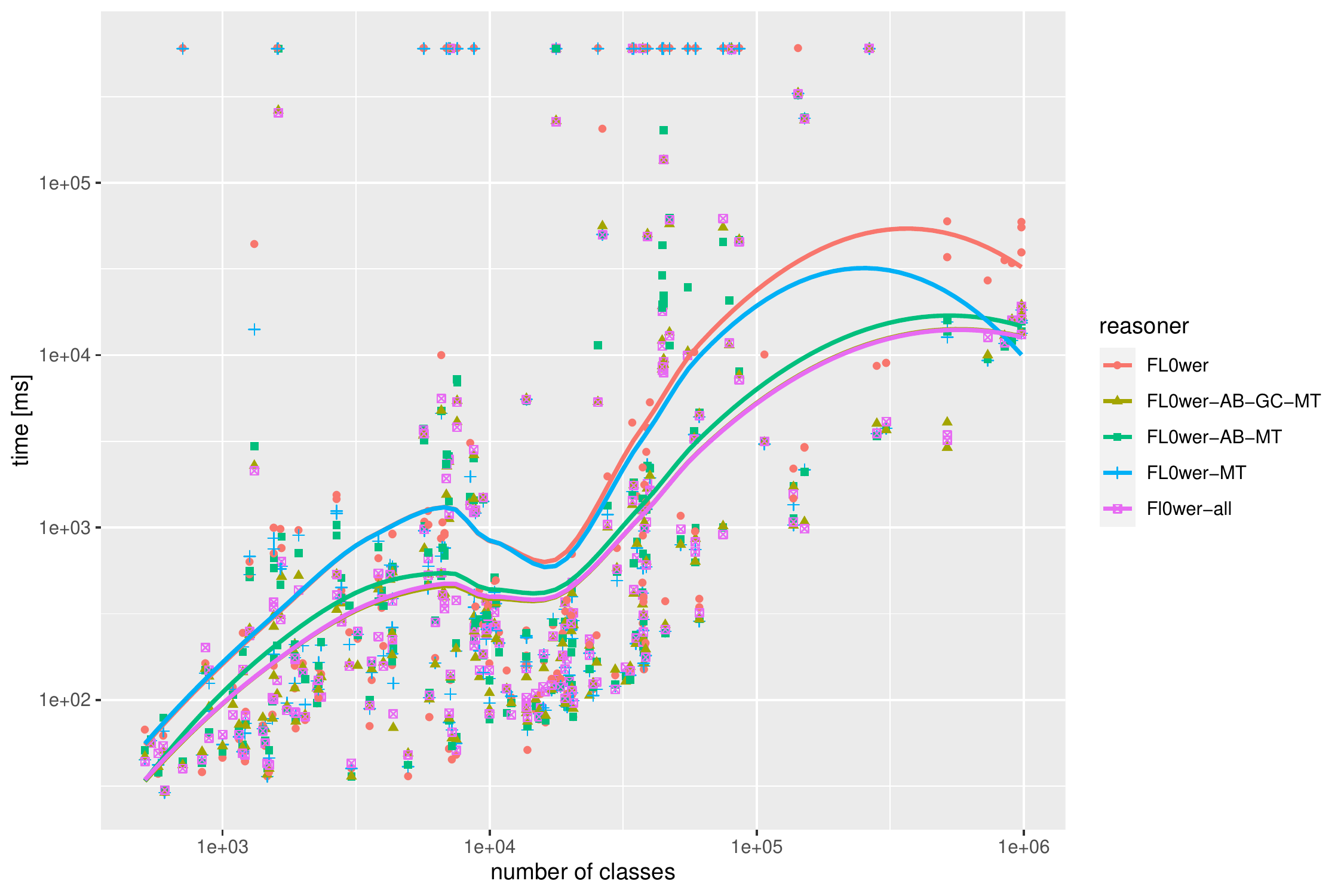}
\end{center}

\medskip

\corpusB:

\medskip

\begin{tabular}{c|r|r|r|r|r}
 & \flower & \flowerMT & \flowerMTAB & \flowerMTABGC & \flowerALL \\
Timeouts:  & 0.00\% & 0.00\% & 0.00\% & 0.00\% & 0.00\%
\end{tabular}

\begin{center}
 \includegraphics[width=.75\linewidth]{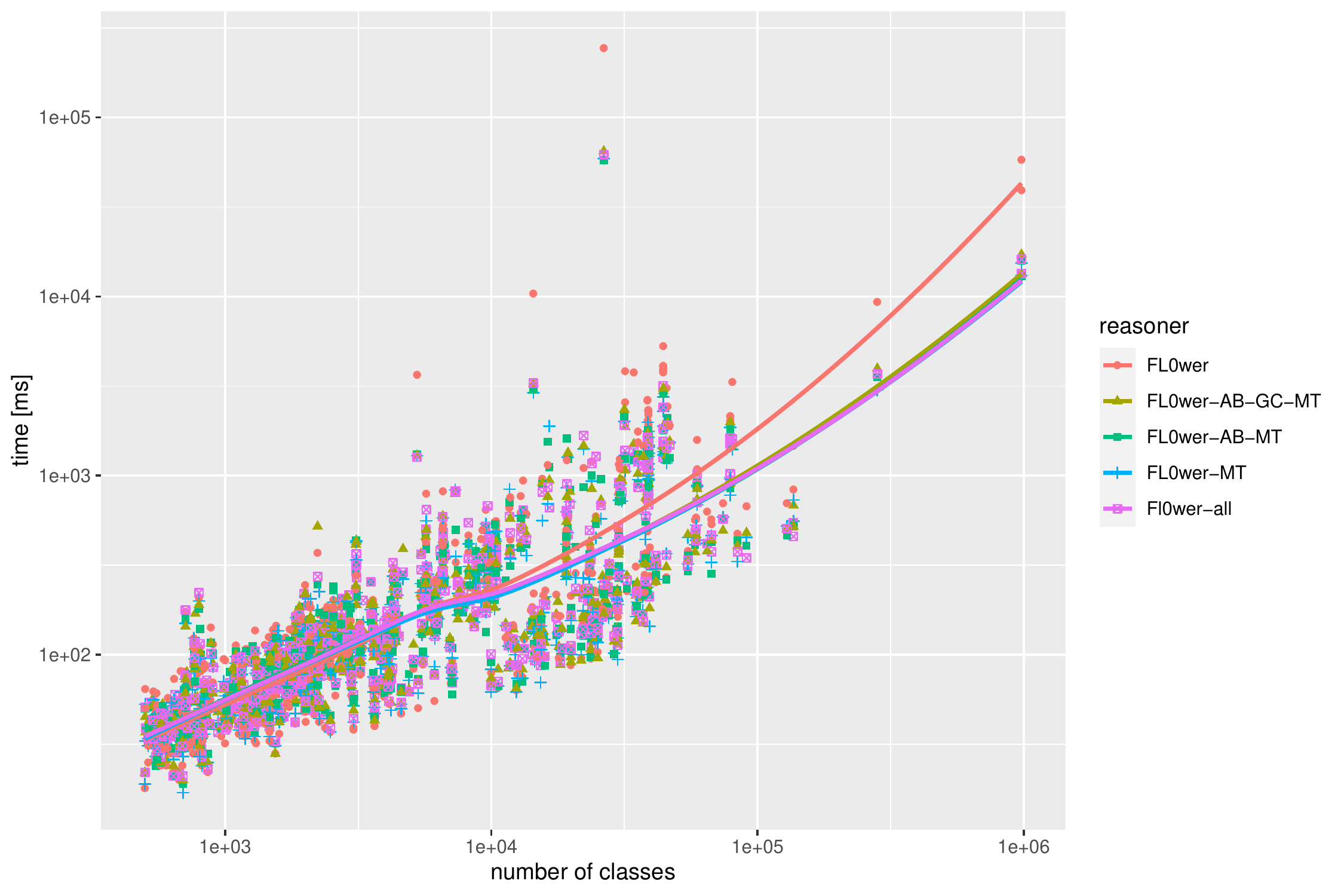}
\end{center}

\caption{Timeouts and running times for classification with the different optimizations. {Note that the curve for \flowerMTABGC is almost completely hidden under the curve for \flowerALL.}
}
\label{fig:classification-flower}
\end{figure}

{For the subsumption tests, the biggest impact was caused by ancestor-blocking, despite the additional obstacles in the implementation. On the other hand, considering that termination can only be guaranteed with ancestor-blocking activated, and that the additional blocking condition may lead to {fewer} nodes being generated, a positive effect was to be expected. In fact, for \corpusA, with ancestor blocking activated, the timeout rate dropped from 17.91\% to 4.10\%.
}
However, this positive effect was only notable for the ontologies in \corpusA, which can be explained by the simpler structure of the ontologies in \corpusB which in turn lead to simpler functional models. For a single subsumption task, the other optimizations merely seem to create an overhead and do not improve the performance in general. This is obvious for the caching procedure, which only brings a benefit if more than one subsumption task is performed. The largest impact here seems to be obtained by the role filtering. Interestingly, \flower's reasoning time seems hardly correlated with the number of classes in the ontology---only if this number becomes very large, optimizations seem to have even a negative impact.

For classification computed on the \corpusA, besides {ancestor blocking, global caching makes a noticeable impact, though it is not as large as one would expect for this task. In contrast, the impact of role filtering} is not as strong, though it decreases the number of timeouts from 2.99\% to 2.61\%. We also observe that for \corpusB, none of the optimizations apart from multithreading seem to be really indispensable. Again this can be explained by the simpler structure of the ontologies considered here.

\subsection{Comparison with other DL reasoners}

\newcommand{\hermit}{HermiT\xspace}
\newcommand{\openllet}{Openllet\xspace}
\newcommand{\jfact}{JFact\xspace}
We evaluated \flower to see how its reasoning times compare with those of other state-of-the-art DL reasoners. We used the configuration of our DL reasoner with all optimizations active: \flowerALL.
Since there is no other dedicated reasoner for \FLnull, we used reasoner systems that can handle expressive DLs of which \FLnull is a fragment. 
Here, we focused on reasoners which are implemented in Java just as \flower, and selected the following three state-of-the-art reasoning systems:
\begin{itemize}
	\item HermiT\footnote{\url{hermit-reasoner.com}}, version 1.3.8.510,
	\item Openllet\footnote{\url{github.com/Galigator/openllet}}, version 2.6.3, and
	\item JFact\footnote{\url{jfact.sourceforge.net}}, version 5.0.1.
\end{itemize} 
All three reasoners implement the OWL API \cite{DBLP:journals/semweb/HorridgeB11}, 
which allows us to measure and compare the time needed for the reasoning tasks alone---excluding the time
for loading the ontologies using the OWL API. 
Note that furthermore, all three reasoners implement tableaux-based algorithms, so that this comparative evaluation also serves as a comparison of the different approaches: least functional model generation vs.\ tableaux-based approach. 
Regarding the actual reasoner implementations, we note that these are all complex and mature reasoning \emph{systems} that come with more sophisticated optimizations than \flower, { which makes it even more surprising that \flower performs quite well in comparison.} The timeouts and reasoning times of \flower compared with those of the above three reasoners, are shown in Figure~\ref{fig:subsumption-all} for the subsumption experiment, and in Figure~\ref{fig:classification-all} for the two classification experiments. 
\begin{figure}
\begin{tabular}{c|r|r|r|r}
 & \hermit & \jfact & \openllet & \flowerALL \\
Timeouts:  & 0.001\% & 5.57\% & 1.53\% & 0.00\%
\end{tabular}

\begin{center}
 \includegraphics[width=.75\linewidth]{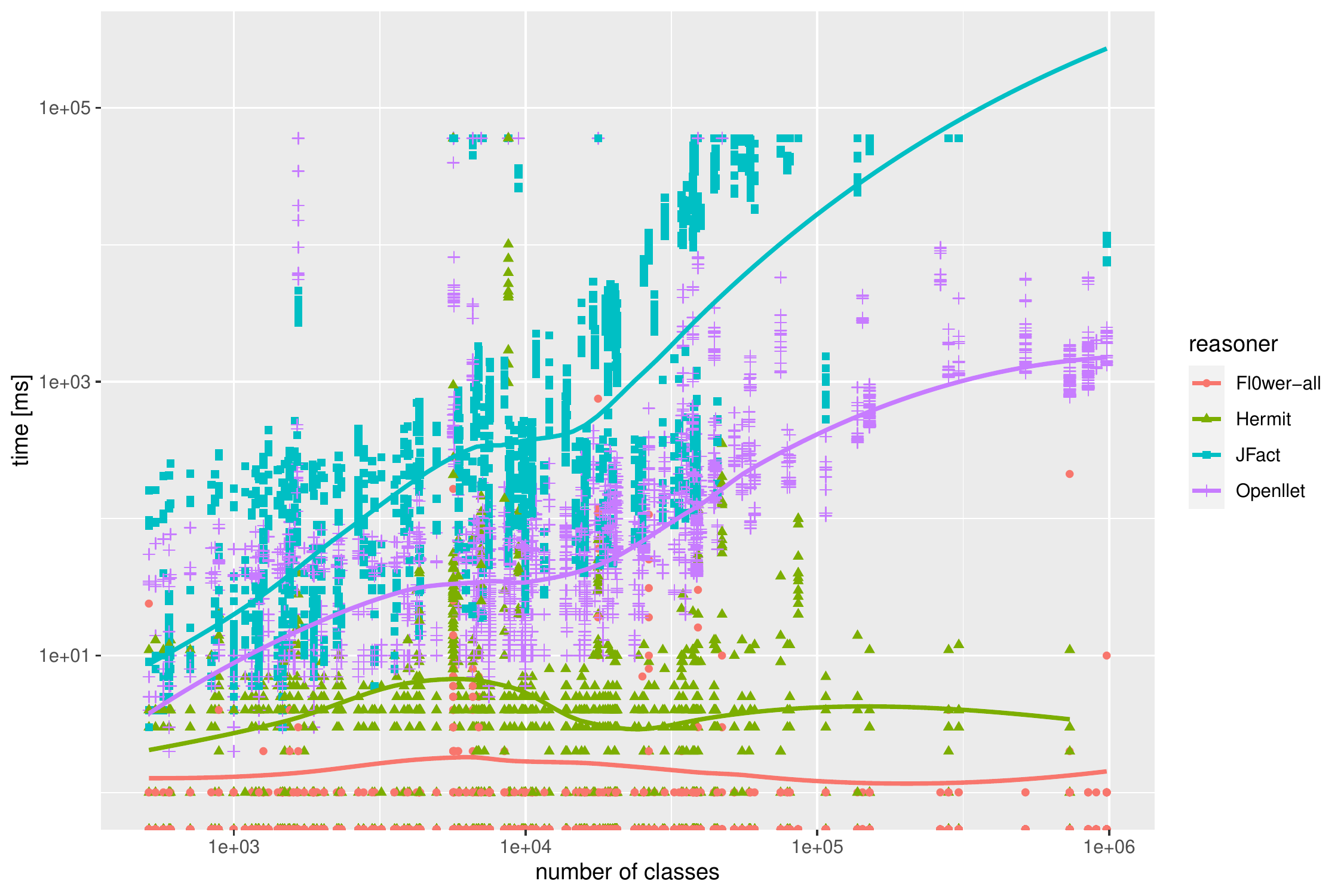}
\end{center}

\caption{Timeouts and running times for subsumption tests w.r.t.\ \corpusA for the different reasoners.}
\label{fig:subsumption-all}
\end{figure}    

Interestingly, for the subsumption tests, 
the performance of both \flower and \hermit seems hardly affected by the number of concept names in the ontology. This number has a much bigger impact for \jfact and \openllet which need orders of magnitude more running time than \flower and \hermit to decide subsumption for the test cases. 
However, while some subsumption tasks still lead to timeouts for \hermit in 0.0047\% of cases, no timeouts were observed by  \flower.  
Generally, \flower performs substantially better for this task (on this test set) than the other DL reasoners.

\begin{figure}
\corpusA:

\medskip

\begin{tabular}{c|r|r|r|r}
 & \hermit & \jfact & \openllet & \flowerALL \\
Timeouts:  & 9.50\% & 21.01\% & 4.78\% & 2.61\%
\end{tabular}

\begin{center}
 \includegraphics[width=.75\linewidth]{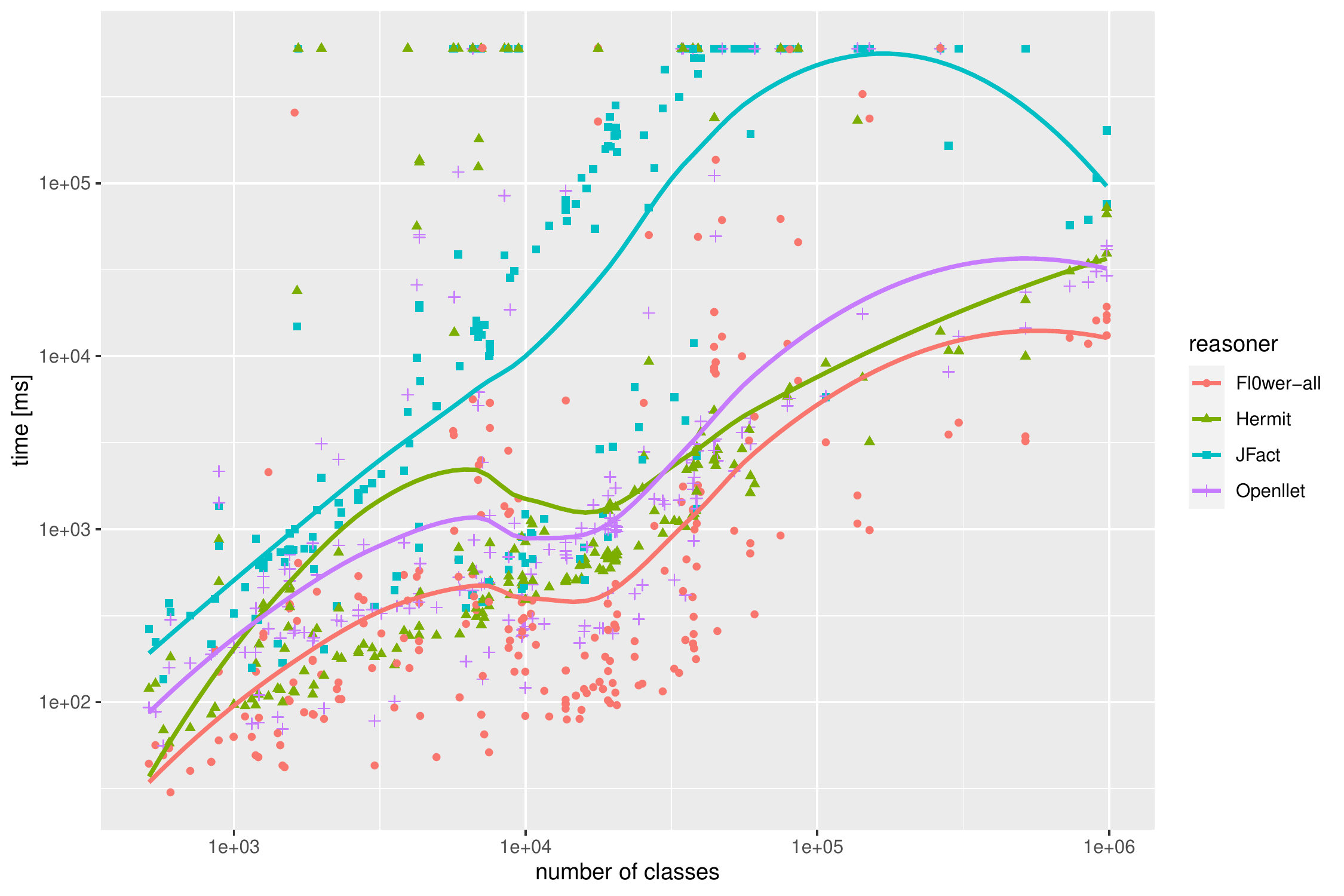}
\end{center}

\medskip

\corpusB:

\medskip

\begin{tabular}{c|r|r|r|r}
 & \hermit & \jfact & \openllet & \flowerALL \\
Timeouts:  & 0.00\% & 2.19\% & 0.00\% & 0.00\%
\end{tabular}

\begin{center}
 \includegraphics[width=.75\linewidth]{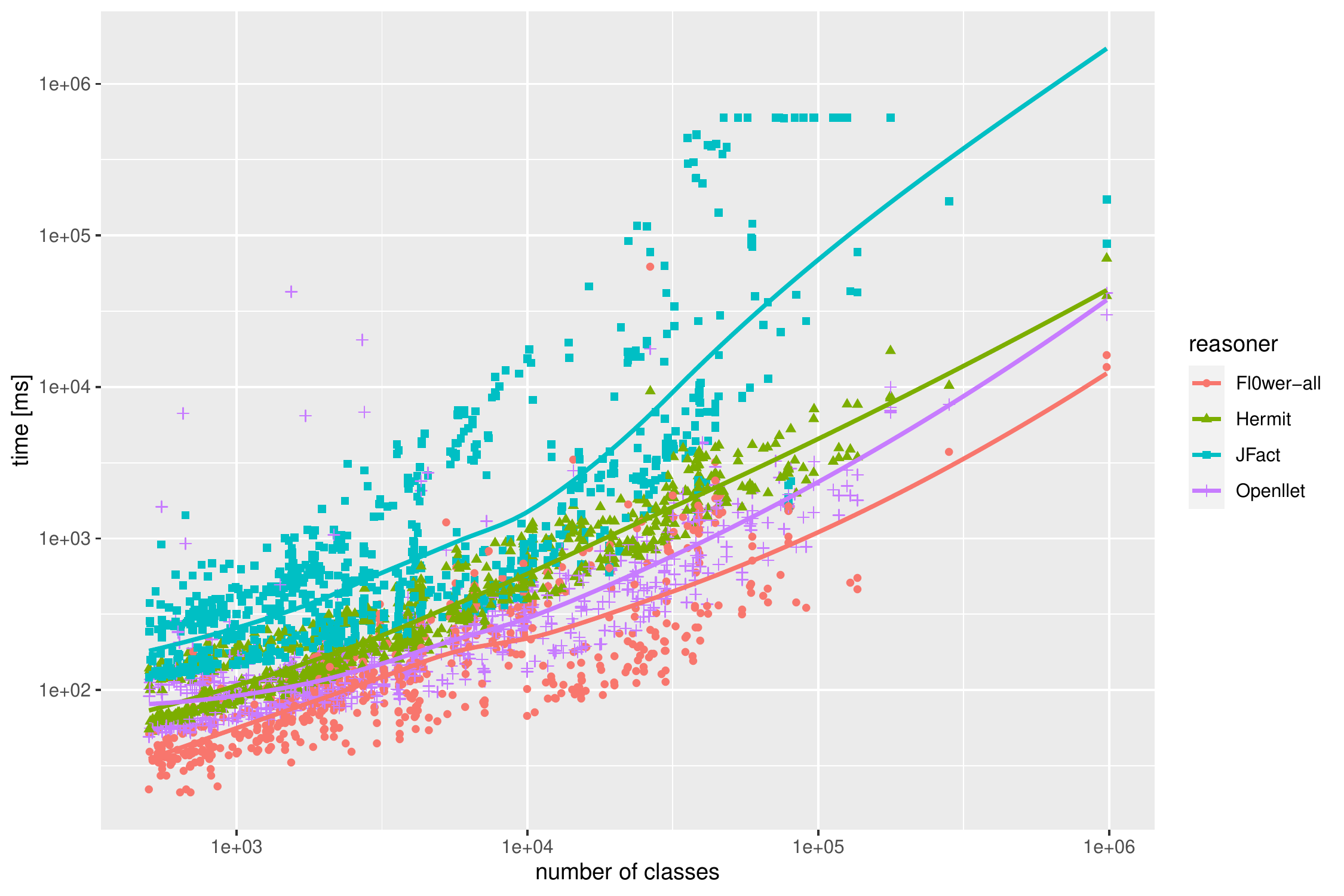}
\end{center}

\caption{Timeouts and running times for classification with the different reasoners.}
\label{fig:classification-all}
\end{figure}

For classification on \corpusA, our measurements indicate that \flower performs the best among the four systems. \jfact's running time is roughly an order of magnitude higher than the one of \flower.
\hermit has twice as many timeouts as \openllet, but the picture on running times is more mixed, where \hermit often performed better than \openllet. \flower again almost halved the number of timeouts compared to \openllet, but here, the performance looks consistently better than for all other reasoners. 
Interestingly, the (interpolated) performance curves of \flower, \hermit, and \openllet show very similar characteristics, as they develop almost synchronously. This may suggest that the same kind of ontology is difficult for all three systems and for both reasoning approaches. 
For \corpusB, the general picture is in principle similar. We can see a clear ranking between the reasoners, with \flower performing generally the best. For this corpus there were only timeouts for \jfact.
Again, the (interpolated) performance curves of \hermit and \openllet are similar to the one of \flower---albeit less strongly as in the case of the \corpusA. However, this may  
support the earlier finding that the same kind of ontology could be difficult for both reasoning approaches.

\smallskip
\noindent
To sum up, the running time  of \flower for testing subsumption and for computing classification is on average substantially better than the one of \jfact, \openllet, and even of \hermit. This is a remarkable result of a comparison between a newcomer system that implements only a few optimizations and well-established systems that have been developed for years. Our comparative evaluation suggests that the same kind of ontology may be difficult (or, alternatively, be easy) for reasoners based on the computation of least functional models as well as for tableaux-based reasoners.

\section{Conclusions}\label{sec:concl}

The main contribution of this paper is a novel algorithm for deciding subsumption in the DL \FLnull \wrt general TBoxes,
and a practical demonstration that this algorithm is easy to implement and behaves surprisingly well on large ontologies. Our reasoner \flower outperforms state-of-the art DL reasoners for testing subsumption and for classifying general TBoxes.

One may ask, however, why a dedicated reasoner for \FLnull is needed, given the facts that the worst-case complexity of reasoning
in \FLnull is as high as for the considerably more expressive DL \ALC and that there are very few pure \FLnull ontologies
available. We argue that such a dedicated reasoner may turn out to be very useful.
First, the latter fact could be due to a {chicken} and egg problem: as long as no dedicated reasoner for \FLnull is available,
there is no incentive to restrict the expressiveness to \FLnull when creating an ontology. When extracting our test ontologies,
we observed that quite a number of application ontologies have large \FLnull fragments.
Second, regarding the former fact, it is well-known in the DL community that worst-case complexity results are not always
a good indication for  how hard reasoning turns out to be in practice. 
Third, some DL reasoners such as 
Konclude\footnote{\url{konclude.com}} and 
MORe \cite{DBLP:conf/semweb/RomeroGH12} 
make use of specialized algorithms for certain language fragments as part of their overall reasoning approach, with
impressive improvements of the performance. Our efficient subsumption algorithm for \FLnull may turn out to be useful
in this context.
Finally, quite a number of non-standard reasoning tasks 
in \FLnull \wrt general TBoxes 
have recently been investigated \cite{DBLP:conf/jelia/BaaderMO16,DBLP:conf/lpar/BaaderGM18,DBLP:conf/gcai/BaaderGP18,DBLP:conf/www/BaaderMP18}.
The algorithms developed for solving these tasks usually depend on sub-procedures that perform subsumption tests or that use the least functional model directly.
Our reasoner \flower thus provides us with an efficient base for implementing such non-standard inferences.

\FloatBarrier



\bibliographystyle{tlplike}
\bibliography{bibliography,medium-string,dl}

\label{lastpage}
\end{document}